\documentclass[conference]{IEEEtran}
\IEEEoverridecommandlockouts
\usepackage{cite}
\usepackage{amsmath,amssymb,amsfonts}
\usepackage{graphicx}
\usepackage{textcomp}
\usepackage{xcolor}
\def\BibTeX{{\rm B\kern-.05em{\sc i\kern-.025em b}\kern-.08em
    T\kern-.1667em\lower.7ex\hbox{E}\kern-.125emX}}

\usepackage{hyperref}
\usepackage{cleveref}
\usepackage[shortlabels]{enumitem}
\usepackage{xspace}
\usepackage{hyphenat}
\usepackage{todonotes}
\usepackage{booktabs}
\usepackage{mathtools}
\usepackage{balance}
\usepackage{float}
\usepackage{layouts}
\usepackage[font=small,labelfont=bf,tableposition=top]{caption}
\DeclareCaptionLabelFormat{andtable}{#1~#2  \&  \tablename~\thetable}

\usepackage{amsthm}
\usepackage{array,multirow}
\usepackage{algorithm}
\usepackage{algorithmicx}
\usepackage[noend]{algpseudocode}
\newcolumntype{C}[1]{>{\centering\let\newline\\\arraybackslash\hspace{0pt}}m{#1}}

\algrenewcommand\algorithmicindent{0.7em}

\newcommand{\ourcompressor}{NeaTS\xspace}
\newcommand{\ourcompressorlossy}{\mbox{NeaTS-L}\xspace}

\newcommand{\rank}{\textsf{rank}\xspace}
\newcommand{\select}{\textsf{select}\xspace}

\newtheorem{corollary}{Corollary}
\newtheorem{theorem}{Theorem}
\newtheorem{definition}{Definition}

\newcommand{\slp}{\ensuremath{m}}
\newcommand{\ic}{\ensuremath{b}}
\newcommand{\funset}{\ensuremath{\mathcal{F}}}
\newcommand{\errset}{\ensuremath{\mathcal{E}}}
\newcommand{\dist}{\ensuremath{\mathit{distance}}}
\newcommand{\prev}{\ensuremath{\mathit{previous}}}

\newcommand{\vstart}{\ensuremath{S}}
\newcommand{\vepsilon}{\ensuremath{B}}
\newcommand{\vcorrections}{\ensuremath{C}}
\newcommand{\voffcorr}{\ensuremath{O}}
\newcommand{\vkinds}{\ensuremath{K}}
\newcommand{\vparams}{\ensuremath{P}}

\newcommand{\brotli}{Brotli\xspace}
\newcommand{\chimp}{Chimp\xspace}
\newcommand{\chimpo}{Chimp128\xspace}
\newcommand{\DAC}{DAC\xspace}
\newcommand{\leco}{LeCo\xspace}
\newcommand{\gorilla}{Gorilla\xspace}
\newcommand{\tsxor}{TSXor\xspace}
\newcommand{\xz}{Xz\xspace}
\newcommand{\lz}{Lz4\xspace}
\newcommand{\snappy}{Snappy\xspace}
\newcommand{\zstd}{Zstd\xspace}
\newcommand{\alp}{ALP\xspace}

\begin{document}

\title{Learned Compression of Nonlinear Time Series With~Random~Access\thanks{This work has been accepted for publication in Proceedings of the 41st IEEE International Conference on Data Engineering (ICDE 2025)}}

\author{\IEEEauthorblockN{Andrea Guerra*}
\IEEEauthorblockA{\textit{University of Pisa} \\
Pisa, Italy}
\and
\IEEEauthorblockN{Giorgio Vinciguerra*}
\IEEEauthorblockA{\textit{University of Pisa} \\
Pisa, Italy
}
\and
\IEEEauthorblockN{Antonio Boffa\textsuperscript{†}}
\IEEEauthorblockA{\textit{EPFL} \\
Lausanne, Switzerland
}
\and
\IEEEauthorblockN{Paolo Ferragina\textsuperscript{†}}
\IEEEauthorblockA{\textit{Sant'Anna School for Advanced Studies} \\
Pisa, Italy}}

\maketitle
\begingroup\renewcommand\thefootnote{*}
\footnotetext{Equal contribution. Correspondence to: andrea.guerra@phd.unipi.it, giorgio.vinciguerra@unipi.it}
\begingroup\renewcommand\thefootnote{†}
\footnotetext{Work done while the author was at the University of Pisa.}
\endgroup
\endgroup

\begin{abstract}
Time series play a crucial role in many fields, including finance, healthcare, industry, and environmental monitoring. The storage and retrieval of time series can be challenging due to their unstoppable growth. In fact, these applications often sacrifice precious historical data to make room for new data.

General-purpose compressors like \xz and \zstd can mitigate this problem with their good compression ratios, but they lack efficient random access on compressed data, thus preventing real-time analyses. Ad-hoc streaming solutions, instead, typically optimise only for compression and decompression speed, while giving up compression effectiveness and random access functionality.
Furthermore, all these methods lack awareness of certain special regularities of time series, whose trends over time can often be described by some linear and nonlinear functions.

To address these issues, we introduce \ourcompressor, a randomly\hyp{}accessible compression scheme that approximates the time series with a sequence of nonlinear functions of different kinds and shapes, carefully selected and placed by a partitioning algorithm to minimise the space.
The approximation residuals are bounded, which allows storing them in little space and thus recovering the original data losslessly, or simply discarding them to obtain a lossy time series representation with maximum error guarantees.

Our experiments show that \ourcompressor improves the compression ratio of the state-of-the-art lossy compressors that use linear or nonlinear functions (or both) by up to 14\%.
Compared to lossless compressors, \ourcompressor emerges as the only approach to date providing, simultaneously, compression ratios close to or better than the best existing compressors, a much faster decompression speed, and orders of magnitude more efficient random access, thus enabling the storage and real-time analysis of massive and ever-growing amounts of (historical) time series data.
\end{abstract}

\section{Introduction}\label{sec:intro}

Time series are pervasive across a multitude of fields, including finance, healthcare, industry, and environmental monitoring. 
These sorted sequences of time-stamped data points represent a wide variety of dynamic phenomena, from market prices to patient vitals and sensor readings, and they have become invaluable for decision-making, trend analysis, and forecasting.

Unsurprisingly, the efficient storage, transmission, and analysis of time series have become more and more challenging as their volume has grown exponentially~\cite{JensenPT17survey,Aggarwal2013book}, leading to the development of numerous time series databases~\cite{Wang:2023,influxdata,opentsdb,prometheus,timescale}.

Data compression is the key strategy to lower the cost of time series storage and transmission~\cite{Chiarot:2022survey}.
The easiest way to approach it is to use one of the off-the-shelf general-purpose compressors (such as \brotli~\cite{Alakuijala:2018}, \zstd~\cite{Collet:2016}, \xz~\cite{Xz}, \lz~\cite{Collet:2013}, \snappy~\cite{Snappy}, etc.).
These tools are capable of achieving commendable compression ratios, but they require a substantial computational overhead, both in terms of CPU and memory usage, which often makes them unsuitable on hardware- and energy-constrained devices such as smartphones, smart wearable, IoT or edge devices.

Motivated by this shortcoming, several new special-purpose compressors have been developed for time series, often reducing the computational overhead at the expense of lower compression ratios.
Most notably, several works~\cite{Pelkonen:2015,Liakos:2022,Li:2023,ALP} have shown how to compress and decompress floating-point time series data much faster than general-purpose compressors, enabling both high ingestion rates and efficient scans.
However, not much attention has been given to the design and the evaluation of the random access operation to single values of the time series~\cite{Vestergaard:2021,Brandon:2021}, even in benchmarking studies~\cite{FCBench}.
{ This is quite surprising given that the most fundamental queries in time series databases ultimately rely on accessing data within a specific time interval~\cite{KhelifatiKDDC23,hao2021ts}, which from a compressed storage perspective boils down to combining a random access operation (to retrieve the first data point) with a scan (to retrieve the subsequent data points within the interval).
However, providing efficient random access is challenging, and it often conflicts with achieving good compression ratios.
}

Furthermore, none of the above compressors can harness a key peculiarity of time series data: its trends over time can often be described by some linear and nonlinear functions~\cite{TurkmanBook,FanBook}.
Indeed, although there is a rich literature on approximating and indexing a time series via linear~\cite{KeoghCHP01segmenting,Liu:2008,Xie:2014}, polynomial and other functions~\cite{Fuchs:2010,Eichinger:2015,Xu:2012,Qi:2015}, or via Fourier and wavelet transforms~\cite{Keogh:2001}, all these methods are lossy~\cite{Chiarot:2022survey} and thus inapplicable in cases where we need to reconstruct the original data for accurate analyses.
A step in this direction has been made by some  learned compressors that are not specifically designed for time series~\cite{Ao:2011,Boffa:2022talg,leco:2024}. But these approaches either exploit linear functions only~\cite{Ao:2011,Boffa:2022talg} or use sub-optimal partitioning algorithms and non-error-bounded approximations~\cite{Ao:2011,leco:2024}, { so they fall short of reaching the best possible compression efficacy.}

\smallskip\noindent\textbf{Our contribution.} We contribute to the long line of research on time series compression as follows:
\begin{itemize}[leftmargin=*]
    \item We show how to compute piecewise approximations using several kinds of \emph{nonlinear} functions (such as quadratic, radical, exponential, logarithmic, and Gaussian) under a given error bound~$\varepsilon$ optimally, i.e. in linear time and with the guarantee that the number of pieces is minimised. This generalises the classic algorithm to compute piecewise \emph{linear} approximations~\cite{ORourke:1981}.

    \item We introduce an algorithm to partition a time series into variable\hyp{}sized fragments, each associated with a \emph{different} nonlinear approximation, so that the space of the output is minimised. This generalises a previous result for increasing linear functions only~\cite{Boffa:2022talg}.

    \item By combining the above two results with proper succinct data structures, we design \ourcompressor, a new { randomly-accessible} compression scheme that approximates the time series with a sequence of nonlinear functions of different kinds and shapes. The residuals of the approximation are bounded, which allows storing them in little space and thus recovering the original data losslessly, or discarding them to obtain a lossy time series representation with maximum error guarantees.
    \Cref{fig:layout} shows an example of \ourcompressor.

    \item We conduct a thorough experimental evaluation on 16 real-world time series, whose size ranges from thousands to hundreds of millions of data points, comparing our \ourcompressor against 2 lossy compressors, as well as 5 general-purpose and 7 special-purpose lossless compressors, including the recent ALP~\cite{ALP} and \leco~\cite{leco:2024}. Our results show that the lossy version of \ourcompressor improves uniformly the compression ratio of previous error-bounded approximations based on linear or nonlinear functions (or both), with an improvement of up to~14\%. 
    Compared to lossless compressors, \ourcompressor emerges as the only approach to date delivering, simultaneously, compression ratios close to or better than the existing compressors (i.e. the best compression ratio among the special-purpose compressor on 14/16 datasets, and the best overall on 4/16 dataset), a much faster decompression speed, and up to 3 orders of magnitude more efficient random access.
    No other compressor to date can achieve such a good performance in one of these factors without significantly sacrificing others.
    { We finally show that \ourcompressor delivers superior performance across range queries of different sizes, thus benefiting the wide variety of queries in time series databases that access data within specific time intervals.}

\end{itemize}

\smallskip\noindent\textbf{Outline.}
\Cref{sec:background} gives the background and definitions.
\Cref{sec:our-compressor} introduces our \ourcompressor. 
\Cref{sec:experiments} presents our experimental results.
\Cref{sec:related} discusses related work.
\Cref{sec:conclusions} concludes the paper and suggests some open problems.

\section{Background}\label{sec:background}

We now provide some background information, starting with a definition of the data we compress.

\begin{definition}[Time series]\label{def:time-series}
A \emph{time series} $T$ is a sequence of $n$ data points of the form $(x_k, y_k)$, where $x_k \in \mathbb{N}$ is the timestamp, and $y_k \in \mathbb{Z}$ is the value associated with it, ordered increasingly by time, i.e. $T = [(x_1, y_1), (x_2,y_2), \dots, (x_n,y_n)]$ where $x_1 < x_2 < \dots < x_n$. A \emph{fragment} of $T$ is a subsequence $T[i,j] = [(x_i, y_i), \dots, (x_j,y_j)]$ for any two indexes $i,j$ such that $1 \leq i \leq j \leq n$.
\end{definition}

We require the values to be integers, which is common in practice. In fact, single/double-precision floating-point values can be interpreted as 32/64-bit integers, or better, since the values in real-world time series typically have a fixed number $x$ of significant digits after the decimal point, we can multiply them by the constant $10^{x}$ and turn them into integers~\cite{Brandon:2021}. 

\begin{figure}[t]
\centering
\includegraphics{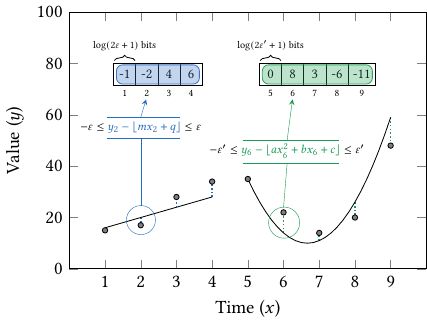}
\caption{\looseness=-1 \ourcompressor represents fragments of the time series via linear or nonlinear functions learned from the data. The residuals of the approximation are bounded by a value $\varepsilon$ so, if lossless compression is needed, we store them in packed arrays (shown on top).}
\label{fig:layout}
\end{figure}

We focus on functional approximations for time series but, unlike known approaches~\cite{Chiarot:2022survey}, we use them to design not only a lossy compressor but also a lossless one.

To illustrate, consider a time series $T$ on a Cartesian plane where the horizontal axis represents time and the vertical axis represents values.
Any function~$f$ passing through all the data points $(x_1,y_1),\dots,(x_n,y_n)$ is a lossless encoding of $T$ because, for a given timestamp $x_k$, we can recover the corresponding value as $y_k=f(x_k)$, thus requiring us to store only the (parameters of the) function $f$.
However, the number of parameters of a function passing exactly through all the data points could be so large to result in no compression (consider e.g. a polynomial interpolation of the $n$ data points, which generally requires storing $n$ coefficients). 
Therefore, we allow $f$ to make some ``errors'' but in a controlled way, namely, we bound the infinity norm of the errors.

\begin{definition}[$\varepsilon$\hyp{}approximations]\label{def:eps-approx}
Let $\varepsilon \geq 0$ be an integer. A function $f$ is said to be an $\varepsilon$-\emph{approximation} of a time series $T$ (or fragment $T[i,j]$) if we have $|f(x_k) - y_k | \leq \varepsilon$ for every data point $(x_k, y_k)$ in $T$ (resp. $T[i,j]$).
\end{definition}

A function $f$ defined in this way is a lossy representation of~$T$. To make it lossless, as observed in~\cite{Boffa:2022talg}, it is sufficient to also store the ``corrections'' (i.e. residuals) $c_k = y_k - \lfloor f(x_k) \rfloor$ in $\lceil \log (2\varepsilon + 1) \rceil$ bits each, which thus allows recovering $y_k$ as $\lfloor f(x_k) \rfloor + c_k$.
Intuitively, the smaller the value of $\varepsilon$, the less space is needed to store the corrections; however, at the same time, more space may be required to store the parameters of~$f$, due to it being more complex to better fit the data points.

In this scenario, there are two issues to solve. The first is how to compute a function that $\varepsilon$\hyp{}approximates (a fragment of) $T$.
The second is how to choose $\varepsilon$ so that the storage of both the function $f$ and the corrections takes minimal space, possibly using a different $\varepsilon$-value for different fragments of~$T$.
In the case $f$ is a linear function, these two issues have already been solved~\cite{ORourke:1981,Boffa:2022talg}.

\label{ssec:computing-linear}
We now recall how to compute a linear  $\varepsilon$\hyp{}approximation, which will be the starting point of our extension to nonlinear functions.
Given an integer $\varepsilon \geq 0$ and an index $i$ into a time series $T$, O'Rourke's algorithm~\cite{ORourke:1981} finds the longest fragment $T[i,j]$ that admits a linear $\varepsilon$\hyp{}approximation $f(x) = \slp x + \ic$ in optimal $O(j-i)$~time.\footnote{More recent papers address this same problem~\cite{KeoghCHP01segmenting,Dalai:2006,Elmeleegy:2009,Liu:2008,Xie:2014}, sometimes proposing algorithms that are equivalent, or sub-optimal in terms of time complexity, or that find shorter fragments compared to the earlier algorithm by O'Rourke~\cite{ORourke:1981} we consider~here.}
The algorithm works by maintaining a set~$P$ of feasible coefficients in the 2D space with $\slp$ on the horizontal axis and $\ic$ on the vertical axis, and by processing the data points in $T[i,j]$ left-to-right by shrinking $|P|$ at each data point. Regarding $P$, notice that, according to~\Cref{def:eps-approx}, the linear function $f$ must satisfy the inequality $|f(x_k) - y_k| = |\slp x_k + \ic - y_k| \leq \varepsilon$ for every $k=i,\dots,j$, which can be rewritten as $\ic \leq (-x_k) \slp + y_k + \varepsilon$ and $\ic \geq (-x_k) \slp + y_k - \varepsilon$.
Therefore, $P$ is a convex polygon in the 2D space defined by the intersection of the $2(j-i+1)$ half-planes arising from these two inequalities for $k=i,\dots,j$.

When adding the $(j+1)$th data point would cause $P$ to be empty (i.e. the fragment $T[i,j]$ cannot be made any longer), the algorithm stops and picks a pair $(\slp,\ic) \in P$ as the coefficients of the linear function $f$ that $\varepsilon$\hyp{}approximates the fragment $T[i,j]$.

Let us call $P_k$ the polygon at a generic step $k$.
Crucial for the time-efficiency of the algorithm is the fact that each edge of $P_k$ has a slope that lies in the range $[-x_k,0)$, which is easy to see since each half-plane defining $P_k$ has slope of the form $-x_l$ (due to the above inequalities) and that $0 < x_l < x_k$ for $l < k$ (due to~\Cref{def:time-series}).
This fact allows updating $P_k$ with the inequalities arising from the $(k+1)$th data point in constant amortised time.

We refer the reader to the seminal paper~\cite{ORourke:1981} for more details and anticipate that we will generalise this algorithm to work with several kinds of nonlinear functions.

\section{The \protect\ourcompressor compressor}\label{sec:our-compressor}
We now introduce our compression scheme for time series. We call it \ourcompressor (\underline{N}onlinear \underline{e}rror-bounded \underline{a}pproximation for \underline{T}ime \underline{S}eries) since it exploits the potential of nonlinear functions to compress time series.
We design it in three steps.

\begin{enumerate}[leftmargin=*]
    \item We describe how to compute an $\varepsilon$\hyp{}approximation for a fragment of the time series using several kinds of nonlinear functions, such as quadratic, radical, exponential, logarithmic, and Gaussian functions  (\Cref{ssec:nonlinear}).

    \item We introduce an algorithm to divide the input time series into variable-sized fragments, each with an associated nonlinear approximation with a different $\varepsilon$-value, so that the space of the compressed output is minimised. While our focus is on lossless compression, we also show how to adapt this algorithm to produce a (lossy) piecewise nonlinear $\varepsilon$\hyp{}approximation of a time series in linear time (\Cref{ssec:space-min}).

    \item We show how to support random access to individual values of the time series by combining the compressed output with proper succinct data structures (\Cref{sec:random-access}).
\end{enumerate}

\subsection{Computing a nonlinear \texorpdfstring{$\varepsilon$}{ε}-approximation} \label{ssec:nonlinear}
Linear functions have surely been the most widely used representations for time series due to their simplicity~\cite{KeoghCHP01segmenting,Dalai:2006,Elmeleegy:2009,Liu:2008,Xie:2014}. Nonetheless, they may not always be the best choice to approximate a time series because of the possible presence of nonlinear patterns in real-world data~\cite{TurkmanBook,FanBook}.

Let us be given an integer $\varepsilon \geq 0$ and an index $i$ into a time series~$T$. We now show how to find the longest fragment $T[i,j]$ that admits an $\varepsilon$\hyp{}approximation for some kinds of nonlinear functions with two parameters $\theta_1$ and $\theta_2$.
We do so by generalising the classic algorithm by O'Rourke~\cite{ORourke:1981} for linear functions (recalled in~\Cref{ssec:computing-linear}).

We start with exponential functions of the form $f(x) = \theta_2\, e^{\theta_1 x}$. According to~\Cref{def:eps-approx}, we must ensure that $f$ satisfies the inequality $|f(x_k)-y_k| = |\theta_2 e^{\theta_1 x_k}-y_k| \leq \varepsilon$ for every data point $k=i,\dots,j$, which can be rewritten as
\begin{equation*}
\begin{aligned}
    &\ln \theta_2 \leq (-x_k) \theta_1 + \ln(y_k + \varepsilon) \\
    &\ln \theta_2 \geq (-x_k) \theta_1 + \ln(y_k - \varepsilon),
\end{aligned}
\end{equation*}
under the assumption $y_k - \varepsilon > 0$.\footnote{If not satisfied, just add $\varepsilon+1-\min_k y_k$ to all $y_k$s in the time series.}

The above inequalities define a pair of half-planes in the Cartesian plane with $\theta_1$ on the horizontal axis and $\ln \theta_2$ on the vertical axis. Therefore, their intersection for $k=i,\dots,j$ originates a convex polygon of feasible parameters for the exponential function~$f$.
Since the slope of each polygon edge is $-x_k$, a direct reduction to the algorithm by O'Rourke gives an optimal $O(j-i)$-time algorithm to compute an exponential $\varepsilon$\hyp{}approximation of $T[i,j]$.

In general, we can show the following.
\begin{theorem}\label{thm:nonlinear}
Let $T=[(x_1,y_1),\dots,(x_n,y_n)]$ be a time series, let $\varepsilon \geq 0$ be an integer, and let $f$ be a function with two parameters $\theta_1$ and $\theta_2$. 
If, for any $k$, the inequalities $-\varepsilon \leq f(x_k)-y_k \leq \varepsilon$ can be transformed into inequalities of the form $\alpha_k \leq t_k \slp + \ic \leq \omega_k$, where:
\begin{enumerate}[leftmargin=*]
    \item $\alpha_k$, $t_k$, and $\omega_k$ can be computed in constant time from $\varepsilon$ and $T[k]$;

    \item $\slp$ and $\ic$ are derived from $\theta_1$ and $\theta_2$, respectively, via a change of variable, i.e. $\slp=\phi(\theta_1)$ and $\ic=\psi(\theta_2)$ for some invertible functions $\phi$ and $\psi$;

    \item $t_k$ is a positive increasing function of $x_k$.
\end{enumerate}
Then, for any $i$, we can compute the longest fragment $T[i,j]$ that admits an $\varepsilon$\hyp{}approximation $f$ in optimal $O(j-i)$ time. 
\end{theorem}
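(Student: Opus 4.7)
The plan is to reduce the problem directly to O'Rourke's algorithm by exploiting the three structural conditions in the hypothesis. I first apply the change of variables $\slp = \phi(\theta_1)$, $\ic = \psi(\theta_2)$ to move from the $(\theta_1,\theta_2)$-space to the $(\slp,\ic)$-space, and I rewrite the error constraint for each data point $k=i,\dots,j$ in the guaranteed form $\alpha_k \leq t_k \slp + \ic \leq \omega_k$. Equivalently, each such constraint carves out the intersection of two half-planes in the $(\slp,\ic)$-plane, namely $\ic \geq -t_k \slp + \alpha_k$ and $\ic \leq -t_k \slp + \omega_k$, both of which have boundary lines of slope $-t_k$. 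The set of feasible parameters after processing $T[i,k]$ is therefore the convex polygon $P_k$ obtained by intersecting all these half-planes for $i \leq l \leq k$.

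Next I would verify that the key geometric invariant underlying O'Rourke's amortized analysis is preserved. In the original setting, the new pair of half-planes added at step $k{+}1$ has slope $-x_{k+1}$, which is strictly more negative than every edge slope already present in $P_k$ (all of which lie in $[-x_k, 0)$). Here the analogous statement is that every edge of $P_k$ has slope in $[-t_k, 0)$: this holds because condition~(3) ensures that $t_l$ is positive and strictly increasing in $l$ (as $x_l$ is strictly increasing by~\Cref{def:time-series}), so the slopes $-t_i, -t_{i+1}, \dots, -t_k$ appearing on edge boundaries are negative and strictly decreasing, and the pair of half-planes added at step $k{+}1$ has the most negative slope $-t_{k+1}$. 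This monotonicity is exactly the property O'Rourke's algorithm uses to perform the polygon update in constant amortized time, by walking along $\partial P_k$ from the endpoints tangent to the direction $-t_{k+1}$ and trimming off the portion cut away by the two new half-planes.

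Condition~(1) guarantees that the $\alpha_k, t_k, \omega_k$ feeding the update at step $k$ can be computed in $O(1)$ time from $T[k]$ and $\varepsilon$, so the per-step cost matches O'Rourke's. We run this incremental construction starting from $P_{i-1} = \mathbb{R}^2$ (or equivalently the half-planes from the first point) and stop the first time $P_{j+1}$ becomes empty, certifying that $T[i,j]$ is the longest feasible fragment. We then pick any $(\slp,\ic)\in P_j$ and set $\theta_1 = \phi^{-1}(\slp)$, $\theta_2 = \psi^{-1}(\ic)$, which is well-defined by condition~(2); by construction these parameters satisfy the original inequalities $-\varepsilon \leq f(x_k)-y_k \leq \varepsilon$ for all $k=i,\dots,j$, so $f$ is an $\varepsilon$-approximation of $T[i,j]$. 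The total running time is $O(j-i)$, and this is optimal since even reading the fragment requires $\Omega(j-i)$ time.

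The main obstacle I anticipate is purely a bookkeeping one: making sure the reduction is faithful in both directions. In particular, one has to check (i) that $\phi$ and $\psi$ are indeed invertible on the range of $(\slp,\ic)$ produced by the feasibility polygon (so the recovered $(\theta_1,\theta_2)$ is valid for the original $f$), and (ii) that the transformation of $-\varepsilon \leq f(x_k)-y_k \leq \varepsilon$ into $\alpha_k \leq t_k\slp+\ic \leq \omega_k$ is an equivalence (not merely an implication) so that no feasible parameter is lost and no spurious parameter is introduced. Once these two points are discharged, the rest of the argument is a verbatim invocation of O'Rourke's amortized update, with $x_k$ replaced by $t_k$ throughout.
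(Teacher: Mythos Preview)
Your proposal is correct and follows essentially the same route as the paper: reduce to O'Rourke's algorithm by passing to the $(\slp,\ic)$-plane via the change of variables, rewrite each constraint as the pair of half-planes $\ic \geq -t_k\slp + \alpha_k$ and $\ic \leq -t_k\slp + \omega_k$, verify via condition~(3) that every edge slope of $P_k$ lies in $[-t_k,0)$ so that the amortized update analysis of~\cite{ORourke:1981} carries over verbatim, and finally recover $(\theta_1,\theta_2)$ from any feasible $(\slp,\ic)$ using the invertibility in condition~(2). The paper's proof is slightly terser and does not explicitly flag your two bookkeeping caveats, but otherwise the arguments coincide.
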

\begin{proof}
We reduce the computation of the feasible parameters $\theta_1$ and $\theta_2$ of the (possibly nonlinear) function $f$ to the intersection of half-planes in the 2D space with $\slp$ on the horizontal axis and $\ic$ on the vertical axis, for which we can use the algorithm of O'Rourke~\cite{ORourke:1981}.
Using the same notation as in~\cite{ORourke:1981}, let us rewrite the transformed inequalities as $\ic \geq (-t_k)\slp + \alpha_k$ and $\ic \leq (-t_k)\slp + \omega_k$.
These inequalities clearly represent half-planes in that 2D space.

Now, let $P_k$ be the convex polygon resulting from the intersection of these inequalities where the subscript ranges as $i,i+1,\dots,k$.
Analogously to~\cite[Lemma~1]{ORourke:1981}, we need to establish the property that the slope of each edge of $P_k$ belongs to $[-t_k, 0)$.
The polygon is specified by edges of the form $\ic=(-t_l)\slp + \alpha_l$ and $\ic=(-t_l)\slp + \omega_l$ for $l = i, \dots, k$. By assumption (3), the value $t_l$ is the result of applying a positive increasing function to $x_l$. This, combined with the fact that $0 < x_l < x_k$ (by~\Cref{def:time-series}), implies that each edge of $P_k$ has slope $-t_l \geq -t_k$ and that $-t_l < 0$, thus each slope belongs to $[-t_k, 0)$.
This property is enough to guarantee the correctness and time complexity of the algorithm that maintains $P_k$ (cf. proofs of~\cite[Theorems 1 and 2]{ORourke:1981}).
We thus conclude by observing that once the next data point $T[j+1]$ causes the polygon to be empty, we choose a pair $(\slp,\ic) \in P_j$ and return $\theta_1=\phi^{-1}(\slp)$ and $\theta_2=\psi^{-1}(\ic)$ as the parameters of $f$.
\end{proof}

Taking again as an example exponential functions of the form $f(x) = \theta_2 e^{\theta_1 x}$, we first transform the inequalities $-\varepsilon \leq f(x_k) - y_k \leq \varepsilon$ via simple algebraic manipulations to
\begin{equation*}
  \underbrace{\ln(y_k - \varepsilon)}_{\alpha_k}
  \leq \underbrace{x_k}_{t_k} \underbrace{\theta_1}_{\slp} + \underbrace{\ln \theta_2}_\ic
  \leq \underbrace{\ln(y_k+\varepsilon)}_{\omega_k},
\end{equation*}
and then we apply~\Cref{thm:nonlinear}, which gives the desired exponential $\varepsilon$\hyp{}approximation for a fragment $T[i,j]$ in optimal $O(j-i)$ time.\footnote{The logarithm and other operations can be computed in constant time with mild assumptions on the model of computation~\cite{grandjean2023arithmetic}.}
\Cref{tab:examples} shows other examples with linear, exponential, power, logarithmic, and radical functions.\footnote{It is straightforward (and sometimes useful to better approximate the data) to compute a function whose graph is horizontally shifted to the first timestamp $x_i$ of $T[i,j]$: we simply store $x_i$, subtract it from the timestamps in $T[i,j]$, then we apply \Cref{thm:nonlinear} to compute a function $g$ with domain $[0,x_j-x_i]$ and output $f(x)=g(x-x_i)$.}

\begin{table}
\caption{Some examples of two-parameter functions $f$ that we can use in~\Cref{thm:nonlinear}, together with the terms $\slp,\ic,t_k,\alpha_k$ and $\omega_k$ defining the transformed inequalities.}
\label{tab:examples}
\centering
\renewcommand{\arraystretch}{1}
\begin{tabular}{cccccc}
  \toprule
  $f(x)$ 
    & $\slp$
    & $\ic$
    & $t_k$ 
    & $\alpha_k$ 
    & $\omega_k$ \\
  \midrule
  $\theta_2 e^{\theta_1 x}$ 
    & $\theta_1$
    & $\ln\theta_2$
    & $x_k$
    & $\ln(y_k - \varepsilon)$
    & $\ln(y_k + \varepsilon)$ \\
  $\theta_2 x^{\theta_1}$
    & $\theta_1$
    & $\ln\theta_2$
    & $\ln x_k$
    & $\ln(y_k - \varepsilon)$
    & $\ln(y_k + \varepsilon)$ \\
  $\ln(\theta_2 x^{\theta_1})$
    & $\theta_1$
    & $\ln\theta_2$
    & $\ln x_k$ 
    & $y_k-\varepsilon$ 
    & $y_k+\varepsilon$ \\
  $ \theta_1 x + \theta_2$ 
    & $\theta_1$
    & $\theta_2$
    & $x_k$
    & $y_k - \varepsilon$
    & $y_k + \varepsilon$ \\
  $\theta_1\sqrt{x}+\theta_2$ 
    & $\theta_1$
    & $\theta_2$
    & $\sqrt{x_k}$
    & $y_k-\varepsilon$
    & $y_k+\varepsilon$ \\
  $\theta_1 x^2 + \theta_2$
    & $\theta_1$
    & $\theta_2$
    & $x_k^2$
    & $y_k-\varepsilon$
    & $y_k+\varepsilon$ \\
  $\theta_1 x^2 + \theta_2x$
    & $\theta_1$
    & $\theta_2$
    & $x_k$
    & $(y_k-\varepsilon)/x_k$
    & $(y_k+\varepsilon)/x_k$ \\
  $\theta_1 x^3 + \theta_2 x$
    & $\theta_1$
    & $\theta_2$
    & $x_k^2$
    & $(y_k-\varepsilon)/x_k$
    & $(y_k+\varepsilon)/x_k$ \\
  $\theta_1 x^3 + \theta_2 x^2$
    & $\theta_1$
    & $\theta_2$
    & $x_k$
    & $(y_k-\varepsilon)/x_k^2$
    & $(y_k+\varepsilon)/x_k^2$ \\ 
  \bottomrule
\end{tabular}
\end{table}

In some cases, we can use~\Cref{thm:nonlinear} even for functions $f$ with three parameters, provided that we add some constraints to reduce the number of \emph{free} parameters to two, since otherwise the set of feasible parameters for $f$ becomes a polyhedron $P_k$ in a 3D space (i.e. one dimension for each parameter), which we cannot handle in linear time~\cite{PreparataM79,Overmars83,Xu:2012,Qi:2015}. 

Take, for example, quadratic functions of the form $f(x)=\theta_1 x^2 + \theta_2 x + \theta_3$. By forcing the function to pass through the first data point~$T[i]$, i.e. by setting $f(x_i)=y_i$ and thus fixing $\theta_3=y_i-\theta_1 x_i^2 - \theta_2 x_i$ (which we store explicitly), we can transform the inequalities ${-\varepsilon \leq f(x_k)-y_k \leq \varepsilon}$ via simple algebraic manipulations to
\begin{equation*}
    \newcommand{\vphantomfrac}{\vphantom{\frac{x_k^2}{x_k}}}
    \underbrace{\frac{y_k-y_i-\varepsilon}{x_k-x_i}}_{\alpha_k} 
    \leq \underbrace{(x_k+x_i)\vphantomfrac}_{t_k}%
         \underbrace{\theta_1\vphantomfrac}_m + \underbrace{\theta_2\vphantomfrac}_b
    \leq \underbrace{\frac{y_k-y_i+\varepsilon}{x_k-x_i}}_{\omega_k},
\end{equation*}
and then we apply~\Cref{thm:nonlinear}.
A similar derivation can be done for Gaussian-like functions of the form $f(x)=e^{\theta_1 x^2 + \theta_2 x + \theta_3}$.

We conclude this section by observing that a repeated application of~\Cref{thm:nonlinear} from $T[1]$ to $T[n]$ allows partitioning $T$ into the longest fragments associated with an $\varepsilon$\hyp{}approximation, thus giving the following result.

\begin{corollary}\label{cor:lossy-mono-function}
   Given a time series $T=[(x_1,y_1),\dots,(x_n,y_n)]$, a value $\varepsilon \geq 0$, and a function $f$ satisfying the assumptions of \Cref{thm:nonlinear}, we can compute a piecewise $\varepsilon$\hyp{}approximation of~$\,T$ with the smallest number of functions of the $f$-kind in $O(n)$~time.
\end{corollary}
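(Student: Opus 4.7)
The plan is to prove the corollary via a straightforward greedy algorithm that sweeps left-to-right, invoking Theorem~\ref{thm:nonlinear} repeatedly. Concretely, I would start with $i_1 = 1$, apply Theorem~\ref{thm:nonlinear} to obtain the largest index $j_1$ such that $T[i_1, j_1]$ admits an $\varepsilon$-approximation by a function of the $f$-kind, then set $i_2 = j_1 + 1$ and repeat until the whole series is covered. This produces a partition into fragments, each equipped with an $\varepsilon$-approximation, hence a piecewise $\varepsilon$-approximation of $T$.

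For the running time, I would observe that the $k$-th invocation of Theorem~\ref{thm:nonlinear} costs $O(j_k - i_k + 1)$ time, where it is important to account also for the additional data point $T[j_k+1]$ that causes the polygon to become empty (it is examined once but not consumed by this fragment). Since the intervals $[i_k, j_k]$ form a partition of $\{1, \dots, n\}$ and each ``stopping'' data point is charged $O(1)$ extra work before being reused as the first point of the next fragment, the telescoping sum yields an overall $O(n)$ bound.

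The main obstacle, and the step that needs genuine justification, is the optimality of the greedy choice, i.e. that the resulting number of fragments is the minimum possible. I would handle this by an exchange argument. Let $j_1 < j_2 < \cdots < j_m$ be the right endpoints produced by the greedy algorithm, and let $j_1^* < j_2^* < \cdots < j_{m^*}^*$ be the right endpoints of any optimal piecewise $\varepsilon$-approximation. By induction on $k$, I would show $j_k \geq j_k^*$: the base case follows because greedy extends the first fragment as far as possible, so $j_1 \geq j_1^*$; for the inductive step, if $j_{k-1} \geq j_{k-1}^*$, then the $k$-th greedy fragment starts no later than the $k$-th optimal fragment, and since any fragment admitting an $\varepsilon$-approximation of a given kind is hereditary under taking prefixes starting at the same left endpoint (by Definition~\ref{def:eps-approx}), the greedy maximal extension from $i_k \leq i_k^*$ must reach at least $j_k^*$.

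Combining the three ingredients concludes the proof: greedy terminates after $m \leq m^*$ fragments, each computed by Theorem~\ref{thm:nonlinear} in optimal linear time, giving total time $O(n)$ with the minimum number of pieces of the $f$-kind. The only subtlety worth double-checking is the hereditary property invoked in the exchange argument, which follows directly because if $f$ is an $\varepsilon$-approximation of $T[i,j]$ then it is also an $\varepsilon$-approximation of every $T[i,j']$ with $j' \leq j$.
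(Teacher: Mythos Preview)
Your approach --- repeated greedy application of Theorem~\ref{thm:nonlinear} --- is exactly what the paper does; indeed the paper treats the corollary as immediate from this observation and does not spell out an optimality argument at all. Your exchange argument is therefore more detailed than the paper's own justification, but it contains a slip in the inductive step. From $j_{k-1} \ge j_{k-1}^{*}$ you obtain $i_k = j_{k-1}+1 \ge j_{k-1}^{*}+1 = i_k^{*}$, i.e.\ the greedy $k$-th fragment starts \emph{no earlier} than the optimal one --- the opposite of what you wrote. Consequently the hereditary property you need is not the prefix property you invoke but closure under taking \emph{subfragments}: the optimal $k$-th fragment $T[i_k^{*}, j_k^{*}]$ admits an $\varepsilon$-approximation, and since $i_k^{*} \le i_k \le j_k^{*}$, the same function witnesses that its subfragment $T[i_k, j_k^{*}]$ does too (Definition~\ref{def:eps-approx} is trivially closed under arbitrary subfragments). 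Hence greedy, which returns the \emph{longest} $\varepsilon$-approximable fragment starting at $i_k$, must reach at least $j_k^{*}$, completing the induction. With this correction your argument is sound and the rest of the plan (the $O(n)$ telescoping time bound, the charging of the stopping point) is fine.
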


\Cref{cor:lossy-mono-function} directly yields a lossy error-bounded (in terms of infinity norm) representation of $T$. 
As discussed in~\Cref{sec:background}, this can be made lossless by storing the corrections $y_k-\lfloor f(x_k) \rfloor$ in $\lceil \log(2\varepsilon+1)\rceil$~bits each.
In the next section, we describe a more powerful partitioning algorithm to orchestrate different types of nonlinear functions and error bounds.

\subsection{Partitioning a time series with nonlinear \texorpdfstring{$\varepsilon$}{ε}-approximations}\label{ssec:space-min}

Let us be given a set $\funset$ of functions that satisfy the assumptions of~\Cref{thm:nonlinear}, and a set $\errset$ of error bounds. We now turn our attention to the problem of partitioning a time series $T$ into fragments, each $\varepsilon$\hyp{}approximated (with $\varepsilon \in \errset$) by a function from $\funset$, with the goal of minimising the overall space of the lossless representation of~$T$, which is given by the storage of the corrections and the parameters of the functions.

At a high level, our approach computes, for each $f \in \funset$ and each $\varepsilon \in \errset$, the piecewise $\varepsilon$\hyp{}approximation of $T$ composed of functions of the $f$-kind, and then produces the desired partition of $T$ by stitching together properly-chosen fragments (possibly adjusting their start and end points) taken from the $|\funset|\cdot|\errset|$  different piecewise approximations of $T$. This generalises a previous result for increasing linear functions only~\cite{Boffa:2022talg}

More in detail, we define a graph $\mathcal G$ with one node for each data point in $T$, plus one sink node denoting the end of the time series.
Each fragment $T[i,j-1]$ that is $\varepsilon$\hyp{}approximated by a function $f\in \funset$ produces an edge $(i,j)$ of $\mathcal G$ whose weight $w_{f,\varepsilon}(i,j)$ is defined as the bit-size of the compression of $T[i,j-1]$ via $f$ and the $j-i$ corrections stored in ${\lceil\log(2\varepsilon+1)\rceil}$-bits each, i.e. $w_{f,\varepsilon}(i,j) = (j-i) \lceil\log(2\varepsilon+1)\rceil + \kappa_f$, where $\kappa_f$ is the space in bits taken by the parameters of $f$ (plus some small metadata, such as the function kind, encoded as an index from $\{1,\dots,|\funset|\}$).
Moreover, since $f$ is also an $\varepsilon$\hyp{}approximation of any prefix and suffix of $T[i,j-1]$, other than the edge $(i,j)$ we add to $\mathcal G$ also the prefix edge $(i,k)$ and the suffix edge $(k,j)$, for all $k=i,\dots,j-1$~\cite{Boffa:2022talg}.
It is not difficult to conclude that the shortest path from node $1$ to node $n + 1$ gives the desired partition of $T$.

It is well-known that, in the case of a directed acyclic graph (like $\mathcal G$), the shortest path can be computed by taking the nodes in order $1,\dots,n$ and relaxing their outgoing edges, i.e. checking whether these edges can improve the shortest path found so far~\cite{Cormen:2009book}. Furthermore, generalising what has been done in~\cite{Boffa:2022talg}, instead of precomputing all the $|\funset|\cdot|\errset|$ different piecewise $\varepsilon$\hyp{}approximations, we only keep track of the $|\funset|\cdot|\errset|$ edges of the form $(i,j)$ that overlap the currently visited node $k$, i.e. $i \leq k < j$, and split them on-the-fly into prefix and suffix edges of the forms $(i,k)$ and $(k,j)$, respectively.

\looseness=-1
\Cref{alg:neat-partitioning} formalises this description. We use $\dist[k]$ to store an upper bound on the cost of the shortest path from node 1 to $k$, and $\prev[k]$ to store the previous node and corresponding fragment in the shortest path. We use $J_{f,\varepsilon}$ to keep track of the start/end positions of the fragment overlapping~$k$ and the parameters of the corresponding function of the $f$-kind that $\varepsilon$\hyp{}approximates it. We initialise and update $J_{f,\varepsilon}$ in Line~\ref{li:make-approx} with a call to $\textsc{MakeApproximation}(T,k,f,\varepsilon)$, which runs the algorithm of~\Cref{thm:nonlinear} starting from the data point~$T[k]$. Lines~\ref{li:start-prefix}--\ref{li:end-prefix} and~\ref{li:start-suffix}--\ref{li:end-suffix} relax prefix and suffix edges, respectively, and Lines~\ref{li:start-read}--\ref{li:end-read} conclude the algorithm by reading and returning the shortest path.

\begin{algorithm}[t]
  \caption{Partitioning a time series with \ourcompressor.}\label{alg:neat-partitioning}
  \begin{algorithmic}[1]
    \Require Time series $T[1,n]$, set $\funset$ of functions, set $\errset$ of error bounds
    \Ensure A partitioning of $T$ into fragments, each associated with an $\varepsilon$\hyp{}approximation $f$ (with $\varepsilon \in \errset$, $f \in \funset$), that minimises the size of the \ourcompressor encoding of $T$
    \State $\dist[1,n+1] \gets [\infty, \dots, \infty]$
    \State $\prev[1,n+1] \gets [\textsc{Null}, \dots, \textsc{Null}]$
    \ForAll{$(f,\varepsilon) \in \funset \times \errset$} \Comment{Initialise edges and functions}
        \State $J_{f, \varepsilon}.\mathit{start} \gets -\infty$
        \State $J_{f, \varepsilon}.\mathit{end} \gets -\infty$
        \State $J_{f, \varepsilon}.\mathit{params} \gets \textsc{Null}$
    \EndFor
    \For{$k \gets 1$ \textbf{to} $n$}
        \ForAll{$(f,\varepsilon) \in \funset \times \errset$}
            \If{$J_{f, \varepsilon}.\mathit{end} \leq k$} \Comment{A new edge overlaps $k$}
            \State $J_{f, \varepsilon} \gets \textsc{MakeApproximation}(T, k, f, \varepsilon)$\label{li:make-approx}
            \Else
                \State $i \gets J_{f, \varepsilon}.\mathit{start}$ \Comment{Relax prefix edge $(i,k)$}\label{li:start-prefix}
                \If{$\dist[k] > \dist[i] + w_{f,\varepsilon}(i,k)$}  
                    \State{$\dist[k] \gets \dist[i] + w_{f,\varepsilon}(i,k)$} 
                    \State{$\prev[k] \gets (i, J_{f, \varepsilon})$}\label{li:end-prefix}
                \EndIf
            \EndIf
        \EndFor
        \ForAll{$(f,\varepsilon) \in \funset \times \errset$} 
            \State $j \gets J_{f, \varepsilon}.\mathit{end}$\Comment{Relax suffix edge $(k,j)$}\label{li:start-suffix}
            \If{$\dist[j] > \dist[k] + w_{f,\varepsilon}(k,j)$} 
                \State{$\dist[j] \gets \dist[k] + w_{f,\varepsilon}(k,j)$} 
                \State{$\prev[j] \gets (k, J_{f, \varepsilon})$}\label{li:end-suffix}
            \EndIf
        \EndFor
    \EndFor
    \State $\mathit{result} \gets $ an empty dynamic array\label{li:start-read}
    \State $k \gets n+1$
    \While{$k \neq 1$} \Comment{Read the shortest path backwards}
        \State $\mathit{result}.\textsc{PushFront}(\prev[k])$
        \State $k \gets $ the first element of $\prev[k]$
    \EndWhile
    \State \Return $\mathit{result}$\label{li:end-read}
  \end{algorithmic}
\end{algorithm}

\paragraph*{Complexity analysis}
We now discuss the time complexity of \Cref{alg:neat-partitioning}.
For a fixed $f$ and $\varepsilon$, the overall contribution of Line~\ref{li:make-approx} to the time complexity is $O(n)$, since it eventually computes via~\Cref{thm:nonlinear} the piecewise $\varepsilon$\hyp{}approximation of $T$ composed of a function of the $f$-kind. Since there are $|\funset|\cdot|\errset|$ possible pairs of $f$ and $\varepsilon$, the overall computation of piecewise approximations takes $O(|\funset|\,|\errset|\, n)$~time.
It is easy to see that the relaxation of all the prefix and suffix edges runs within that same asymptotic time bound, thus the overall time complexity of~\Cref{alg:neat-partitioning} is $O(|\funset|\,|\errset|\, n)$.

Concerning $|\funset|$, we can assume that a real-world time series can be approximated well by a fixed number of function kinds, such as those in \Cref{tab:examples}, and thus it holds $|\funset|=O(1)$.
Concerning $|\errset|$, instead, we can pessimistically bound it as follows. Let $\Delta$ be one plus the difference between the maximum value and the minimum value $\hat{y}$ in~$T$. Then, each value $y_k$ of $T$ can be stored in $\lceil \log \Delta \rceil$~bits by just encoding the binary representation of $y_k-\hat{y}$.
This, in turn, entails that we can restrict our attention to the set $\errset = \{0,2^1, \dots, 2^{\lceil \log \Delta \rceil}\}$, since higher values of $\varepsilon$ would not pay off, i.e. even the most trivial constant function can $\varepsilon$\hyp{}approximate the whole time series.
Given that such a set $\errset$ has size $O(\log\Delta)$, the time complexity of~\Cref{alg:neat-partitioning} under these conditions is $O(n\log\Delta)$.

The average value of $\log \Delta$ for the diverse dataset we use in \Cref{sec:experiments} is 28.8, which is a small constant.
Moreover, we do not actually need to use all $\funset \times \errset$ pairs in \Cref{alg:neat-partitioning} but rather those surviving a model-selection procedure. For instance, we can initially run \Cref{alg:neat-partitioning} on a small sample of~$T$ (chosen e.g. according to the seasonality of~$T$) and select just the pairs that are used in the result, as these are likely to be effective and enough for the whole time series too. Our experiments show that this model-selection procedure improves the compression speed by an order of magnitude, with little impact on the compression ratio (see \Cref{sssec:vs-compression-speed}).

\paragraph*{Partitioning for lossy compression}
We can easily modify~\Cref{alg:neat-partitioning} to obtain a lossy representation of the time series $T$ with a given $\varepsilon$-bound on the error, still using functions from a given set~$\funset$ and minimising the space, which is given this time by just the storage of the functions' parameters (since we drop the corrections). 
It is enough to set $\errset=\{\varepsilon\}$ and define the edge weight $w_{f}(i,j)$ to be equal to the space in bits taken by the parameters of~$f$.
The resulting algorithm runs in $O(|\funset|\, n)$, so in linear time if $|\funset|=O(1)$.

Our experiments will show that, for a fixed $\varepsilon$-bound, this algorithm produces more succinct lossy representations of time series than known algorithms based on linear or nonlinear functions (namely, the algorithm by O'Rourke~\cite{ORourke:1981} and the Adaptive Approximation algorithm~\cite{Xu:2012,Qi:2015}, respectively).

\subsection{Designing the \ourcompressor compressor}\label{sec:random-access}

We now describe the layout of the compressed time series and how to support the random access operation.
As common in the literature~\cite{Chiarot:2022survey,Liakos:2022,Pelkonen:2015,Li:2023,ALP}, we focus on the storage of the values $y_1,\dots,y_n$ and assume the timestamps are $1,\dots,n$.\footnote{\label{foot:timestamps}The timestamps $x_1,\dots,x_n$ form an increasing sequence of integers that can be easily mapped to $1,\dots,n$ via monotone minimal perfect hash functions~\cite{Ferragina2023lemon} or compressed rank data structures~\cite{Boffa:2022talg,Ferragina2022repetitions}: the former are very succinct (about 3~bits per integer), the latter take more space but enable range queries over timestamps.\label{foot:timestamps}} 

Let us assume that the output of~\Cref{alg:neat-partitioning} is a sequence of $m$ tuples having the form $\langle\mathit{f}_i,\allowbreak\mathit{params}_i,\varepsilon_i,\mathit{start}_i,\mathit{end}_i\rangle$, where each tuple indicates a fragment $T[\mathit{start}_i,\mathit{end}_i]$ of $T[1,n]$ that is $\varepsilon_i$\hyp{}approximated by a function of kind $\mathit{f}_i \in \funset$ with parameters $\mathit{params}_i$.
We encode these $m$ tuples and the values in their corresponding time series fragments via:

\begin{itemize}[leftmargin=*]
    \item An integer array $\vstart[1,m]$ storing in $\vstart[i]$ the starting position of the $i$th fragment, i.e. $\vstart[i]=\mathit{start}_i$. To obtain the index of the fragment that covers a certain data point $T[k]$, we use the $\vstart.\rank(k)$ operation, which returns the number of elements in $\vstart$ that are smaller than or equal to $k$. Since $\vstart$ is an increasing integer sequence, we compress it via the Elias-Fano encoding~\cite{Elias:1974,Fano:1971}, which supports accessing an element in $O(1)$~time and $\vstart.\rank$ in $O(\min(\log m,\log\tfrac{n}{m}))$~time~\cite{Navarro:2016book}. 

    \item An integer array $\vepsilon[1,m]$ storing in $\vepsilon[i]$ the bit size of the corrections of the $i$th fragment, i.e. $\vepsilon[i]=\lceil \log (2\varepsilon_i+1)\rceil$.

    \item An integer array $\voffcorr[1,m+1]$ storing in $\voffcorr[i]$ the cumulative bit size of the corrections in the fragments preceding the $i$th one, i.e. $\voffcorr[i]=\sum_{j=1}^{i-1} \vepsilon[j] \, (\mathit{end}_j-\mathit{start}_j+1)$. Notice that $O[m+1]$ denotes the overall bit size of the corrections. Similarly to $\vstart$, we compress $\voffcorr$ via the Elias-Fano encoding. 

    \item A bit string $\vcorrections[1,\voffcorr[m+1]]$ storing in $\vcorrections[\voffcorr[i],\voffcorr[i+1]-1]$ the correction values $y_j-\lfloor f_i(x_j) \rfloor$ of the $i$th fragment, where $j \in [\mathit{start}_i, \mathit{end}_i]$.
    
    \item An integer array $\vkinds[1,m]$ storing in $\vkinds[i]$ the function kind for the $i$th fragment, i.e. $\vkinds[i]=\mathit{f}_i$. We regard $\vkinds$ as a string over the alphabet $\{1, \dots, |\funset|\}$ and represent it as a wavelet tree data structure~\cite{GrossiGV03wavelet,Navarro:2016book}. This allows us to compute the $\vkinds.\rank_f(i)$ operation, which returns the number of occurrences of the function kind $f$ in $\vkinds[1,i]$ in $O(\log |\funset|)=O(1)$~time.

    \item For each $f \in \funset$, an array $\vparams_f$ concatenating the parameters $\mathit{params}_i$ of the functions of the same kind $f$. This way, the parameters $\mathit{params}_i$ of the $i$th fragment can be found in $\vparams_{f_i}[\vkinds.\rank_{f_i}(i)]$.
\end{itemize}

All the above arrays use cells whose bit size is just enough to contain the largest value stored in them. If $\funset$ contains functions with the same number of parameters (recall from \Cref{ssec:nonlinear} that we can use functions with more than two parameters), we can simplify the above encoding by avoiding the use of a wavelet tree for $\vkinds$ and by concatenating all the functions' parameters $\vparams_f$ into a single array, which is accessed simply through the index of the queried fragment.

Having defined how we represent $T$ in compressed form via a tuple $\langle\vstart,\vepsilon,\voffcorr,\vcorrections,\vkinds,\vparams\rangle$ of data structures, we are now ready to discuss the decompression and random access operations.

\Cref{alg:neat-decompression} shows how to decompress the whole time series.
For each fragment, we first decode the associated boundaries and kind of approximation function (Lines~\ref{li:decompress-1}--\ref{li:decompress-2}), and then we output all the values $y_k$ within the fragment's boundaries by applying the function to index $k$ and adding the corresponding correction value (Lines~\ref{li:decompress-3}--\ref{li:decompress-4}). It is easy to see that the time complexity of~\Cref{alg:neat-decompression} is $O(n)$ given that all the involved operations take constant time.
Furthermore, since each data point is decompressed independently from the others, the algorithm could be parallelised trivially by decompressing different fragments with different workers, and the computation of the function within a fragment could be implemented via SIMD instructions.

\begin{algorithm}[t]
  \caption{Full decompression in \ourcompressor.}\label{alg:neat-decompression}
  \begin{algorithmic}[1]
    \Require The \ourcompressor encoding $\langle\vstart,\vepsilon,\voffcorr,\vcorrections,\vkinds,\vparams\rangle$ of $T$
    \Ensure The uncompressed values of $T$ 
    \State $o \gets 1$ \Comment{Bit-offset to the correction} 
    \For{$i \gets 1$ \textbf{to} $m$} \Comment{For each fragment}\label{li:decompress-0}
        \State $\mathit{start} \gets \vstart[i]$ \Comment{First data point index} \label{li:decompress-1}
        \State $\mathit{end} \gets \vstart[i+1]-1$ \Comment{Last data point index}
        \State $f \gets \vkinds[i]$ \Comment{Function kind}
        \State $\mathit{params} \gets \vparams_{f}[\vkinds.\rank_{f}(i)]$ \Comment{Function parameters}
        \State $b \gets \vepsilon[i]$ \Comment{Correction bit size} \label{li:decompress-2}
        \For{$k \gets \textit{start}$ \textbf{to} $\textit{end}$}\label{li:decompress-3}
            \State $\tilde{y} \gets $ compute $\lfloor f(k)\rfloor$ using $\mathit{params}$
            \State \textbf{output} $\tilde{y} + \mathit{int}(C[o,o+b-1])$
            \State $o \gets o + b$ \label{li:decompress-4}
        \EndFor
    \EndFor
  \end{algorithmic}
\end{algorithm}
\begin{algorithm}[t]
  \caption{Random access in \ourcompressor.}\label{alg:neat-ra}
  \begin{algorithmic}[1]
    \Require An index $k$, the \ourcompressor encoding $\langle\vstart,\vepsilon,\voffcorr,\vcorrections,\vkinds,\vparams\rangle$ of $T$
    \Ensure The value of $T[k]$
    \State $i \gets \vstart.\rank(k)$ \Comment{Index of the fragment}\label{li:ra-1}
    \State $\mathit{start} \gets \vstart[i]$ \Comment{First data point index}\label{li:ra-2}
    \State $f \gets \vkinds[i]$ \Comment{Function kind}
    \State $\mathit{params} \gets \vparams_{f}[\vkinds.\rank_{f}(i)]$ \Comment{Function parameters}
    \State $b \gets \vepsilon[i]$ \Comment{Correction bit size}\label{li:ra-3}
    \State $\tilde{y} \gets $ compute $\lfloor f(k)\rfloor$ using $\mathit{params}$\label{li:ra-4}
    \State $o \gets \voffcorr[i]+(k-\mathit{start})b$ \Comment{Bit-offset to the correction}
    \State \Return $\tilde{y} + \mathit{int}(\vcorrections[o,o+b-1])$\label{li:ra-5}
  \end{algorithmic}
\end{algorithm}

\Cref{alg:neat-ra} shows how to perform the random access operation to the value of $T[k]$, for a given index $k \in \{1, 2, \ldots, n\}$. We start by identifying the index of the fragment where $T[k]$ falls into (Line~\ref{li:ra-1}), then we decode the index of the first data point in that fragment and the (kind and parameters of) function associated with that fragment  (Lines~\ref{li:ra-2}--\ref{li:ra-3}), and finally we apply the function to position~$k$ and add the corresponding correction value (Lines~\ref{li:ra-4}--\ref{li:ra-5}). The time complexity of~\Cref{alg:neat-ra} is dominated by the operation $\vstart.\rank$ at Line~\ref{li:ra-1}, which takes $O(\min(\log m,\log\tfrac{n}{m}))$~time.
We can easily achieve $O(1)$~time by representing $\vstart$ as a bitvector of length~$n$ with a~$\mathtt{1}$ in each position $\mathit{start}_i$, and then using the well-known constant-time \rank/\select operations~\cite{Jacobson:1989,Clark:1996}.

\section{Experiments}\label{sec:experiments}

\subsection{Experimental setting}

We run our experiments on a machine with 1.17~TiB of RAM and an Intel Xeon Gold 6140M CPU, running CentOS~7.
Our code is in C\texttt{++}23, compiled with GCC 13.2.1, and publicly available at
{\url{https://github.com/and-gue/NeaTS}}.
We refer to our lossy and lossless approaches as \ourcompressorlossy and \ourcompressor, respectively. 
We use four types of functions\,---\,namely, linear, exponential, quadratic, and radical\,---\,which turned out to be sufficient to capture the trends in our real-world datasets well.
We use vector instructions in our decompression procedures via the \texttt{std::experimental::simd}~library, and succinct data structures from the \texttt{sdsl}~\cite{gbmp2014} and \texttt{sux}~\cite{Vigna:2008} libraries.

\subsubsection{Datasets}
We use 16 real-world time series datasets out of which 13 were sourced by Chimp~\cite{Liakos:2022}, and the remaining 3 were obtained from the Geolife project~\cite{zheng2011geolife} and a study on arrhythmia~\cite{ecg,ecgpaper}. 
{ Consistent with previous studies~\cite{Liakos:2022,Pelkonen:2015,Li:2023,ALP}, we ignore  timestamps as they are either consecutive increasing integers or can be transformed into such with other ad hoc data structures (see \Cref{foot:timestamps}).}
All the datasets report values in textual fixed-precision format, therefore, unless the compressor is designed for doubles, we transform them into 64-bit integers by multiplying each value by a factor~$10^{x}$, where $x$ is the number of fractional digits.

\begin{itemize}[leftmargin=*]
    \item[-] \textbf{IR-bio-temp (IT)}~\cite{IRBioTemp} contains about 477M biological temperature observations from an infrared sensor, with 2 fractional digits.
    
    \item[-] \textbf{Stocks-USA (US)}, \textbf{Stocks-UK (UK)}, and \textbf{Stocks-DE (GE)}~\cite{Stocks}  contain about 282M, 59M, and 43M stock exchange prices of USA, UK, and Germany, with 2, 1, and 3 fractional digits, respectively.
    
    \item[-] \textbf{Electrocardiogram (ECG)}~\cite{ecg,ecgpaper} contains about 226M electrocardiogram signals of over 45K patients, with 3 fractional digits.
    
    \item[-] \textbf{Wind-direction (WD)}~\cite{WindDir} contains about 199M wind direction observations, with 2 fractional digits.
    
    \item[-] \textbf{Air-pressure (AP)}~\cite{AirPressure} contains about 138M timestamped values of barometric pressure corrected to sea level and surface level, with 5 fractional digits.
    
    \item[-] \textbf{Geolife-longitude (LON)}, and \textbf{Geolife-latitude (LAT)}~\cite{zheng2011geolife} contain about 25M timestamped longitude and latitude values of 182 users' GPS trajectories, with 4 fractional digits.
    
    \item[-] \textbf{Dewpoint-temp (DP)}~\cite{DewpointTemp} contains about 5M relative dew point temperature observations, with 3 fractional digits.

    \item[-] \textbf{City-temp (CT)}~\cite{CityTemp} contains about 3M temperature observations of cities around the world, with 1 fractional digit.
    
    \item[-] \textbf{PM10-dust (DU)}~\cite{PM10Dust} contains about 334K measurements of PM10 in the atmosphere, with 3 fractional digits.
    
    \item[-] \textbf{Basel-wind (BW)}, and \textbf{Basel-temp (BT)}~\cite{Basel} contain about 130K records of wind speed and temperature data of Basel (Switzerland), with 7 and 9 fractional digits, respectively.
    
    \item[-] \textbf{Bird-migration (BM)}~\cite{influxdb2data} contains about 18K positions of birds, with 5 fractional digits.
    
    \item[-] \textbf{Bitcoin-price (BP)}~\cite{influxdb2data} contains about 7K prices of Bitcoin in the dollar exchange rate, with 4 fractional digits.
\end{itemize}

\subsubsection{Competitors}\label{sssec:competitors}

Regarding the lossy compressors, we compare our \ourcompressorlossy against 2 functional approximation algorithms: the optimal Piecewise Linear Approximation algorithm (PLA)~\cite{ORourke:1981}, and the Adaptive Approximation algorithm (AA)~\cite{Xu:2012,Qi:2015} that combines linear, exponential and quadratic functions. We implemented the PLA and the AA algorithms in C\texttt{++} since their code is not publicly available.

Regarding the lossless compressors, we compare our \ourcompressor against 5 widely-used general-purpose compressors\,---\,namely, \xz~\cite{Xz}, \brotli~\cite{Alakuijala:2018}, \zstd~\cite{Collet:2016}, \lz~\cite{Collet:2013}, and \snappy~\cite{Snappy}\,---\,and {7} state-of-the-art special-purpose compressors\,---\,namely, \chimp and \chimpo~\cite{Liakos:2022}, \tsxor~\cite{Bruno:2021}, \DAC~\cite{Brisaboa:2013}, \gorilla~\cite{Pelkonen:2015}, \leco~\cite{leco:2024}, and \alp~\cite{ALP}.
We use the Squash library ~\cite{squashlib} for all the general-purpose compressors.
We use the public implementations of \tsxor and \gorilla available in the repository of~\cite{Bruno:2021}, the implementation of \DAC available in \texttt{sdsl}~\cite{gbmp2014}, and the original implementations of \leco\ and \alp.
We ported \chimp and \chimpo to C\texttt{++} since their original implementations are in Java.

\looseness=-1
Following~\cite{Liakos:2022,Li:2023}, we apply compressors that do not natively support random access (thus excluding \DAC, \leco, and  \ourcompressor) to blocks of 1000 consecutive values. We then maintain an array that maps each block index to a pointer referencing the starting byte of the block in the compressed output.

\subsection{On the lossy compressors}\label{ssec:exp-lossy}

To evaluate the lossy compressors with a meaningful error-bound parameter~$\varepsilon$, we determined the smallest $\varepsilon$ such that \ourcompressorlossy achieves better compression than our lossless compressor \ourcompressor.
We express the resulting $\varepsilon$ as a \% of the range of values (i.e. largest minus smallest value) in a dataset, and the compression ratio as the size of the compressed output divided by the size of the original data.%

The results in~\Cref{tab:lossy} show that \ourcompressorlossy outperforms in compression ratio both the PLA and the AA algorithms on all datasets.
On average, \ourcompressorlossy improves the compression ratio of PLA by 7.02\% and the one of AA by 11.77\%.
This demonstrates that, under the same $\varepsilon$-bound, the use of nonlinear functions allows achieving better compression compared to linear functions alone (as in the widely-used PLA).
In turn, despite employing nonlinear approximations, AA is worse than PLA for nearly all datasets due to the use of a heuristic technique to partition the time series into fragments and of a sub-optimal algorithm for (non)linear $\varepsilon$\hyp{}approximations, two issues that our \ourcompressorlossy solve.

{ For the approximation accuracy, we report that the Mean Absolute Percentage Error (MAPE)\,---\,i.e. the mean of the absolute relative errors between the approximated and the actual values, expressed as a percentage\,---\,is 2.47\% for AA, 2.85\% for \ourcompressorlossy, and 4.37\% for PLA (on average over all datasets).
Therefore, \ourcompressorlossy has a much better accuracy than PLA and a slightly worse accuracy than AA.
This is because AA creates more time series fragments than \ourcompressorlossy, and its functions pass through the first data point of each fragment: two factors that together yield zero errors on many data points.

In terms of compression speed, PLA is the fastest at 123.36~MB/s, followed by AA at 63.11~MB/s, and \ourcompressorlossy at 18.23~MB/s.
These results reflect the higher computational effort of \ourcompressorlossy in achieving better compression ratios.

In terms of decompression speed, PLA is the fastest at 2997.00~MB/s, followed by \ourcompressor at 2561.31~MB/s, and AA at 2420.20~MB/s. This can be attributed to the fact that the linear models in PLA are faster to evaluate, and that \ourcompressor uses fewer fragments than AA, thus reducing the overhead associated with switching between fragments.}

\begin{table}[t]
\caption{Compression ratios of the 3 experimented lossy approaches\,---\,i.e. AA, PLA, and \ourcompressorlossy{}\,---\,on the 16 datasets.}%
\renewcommand{\arraystretch}{0.97}
\setlength{\tabcolsep}{5pt}
\begin{tabular}[b]{clrrrrr}
\toprule
\multirow{2}{*}{Dataset} & \multirow{2}{*}{$\varepsilon$ (\%)} & \multicolumn{3}{c}{Compression ratio (\%)} & \multicolumn{2}{c}{\ourcompressorlossy improv. (\%)} \\  \cmidrule(lr){3-5} \cmidrule(lr){6-7}
&  & AA   & PLA & \ourcompressorlossy & wrt AA & wrt PLA \\

\midrule
IT  & 1.15E-1 & 12.11 & 12.07 & \textbf{11.07} & 8.57 & 8.29 \\
US  & 2.40E-3 & 7.96 & 7.41 & \textbf{6.99 }& 12.09 & 5.65 \\
 ECG & 5.43E-2 & 15.03 & 13.46 & \textbf{12.97} & 13.71 & 3.64 \\
WD & 6.36E-0 & 28.09 & 26.94 & \textbf{24.76} & 11.88 & 8.11 \\
AP & 3.08E-3 & 21.90 & 20.00 & \textbf{19.17} & 12.49 & 4.16 \\
UK & 9.53E-3 & 9.82 & 9.21 & \textbf{8.69 }& 11.50 & 5.63 \\
GE &  9.12E{-3} & 13.95 & 12.79 & \textbf{12.08} & 13.35 & 5.52 \\
LAT & 7.00E{-6} & 25.40 & 23.59 & \textbf{22.09} & 13.03 & 6.35 \\
LON & 1.40E{-5} & 19.92 & 18.32 & \textbf{17.26} & 13.37 & 5.78 \\
DP & 6.32E{-2} & 17.51 & 16.89 & \textbf{15.87} & 9.35 & 6.07 \\
CT & 3.88E{0} & 16.19 & 14.45 & \textbf{13.92} & 14.03 & 3.69 \\
DU & 6.00E{-3} & 10.04 & 10.32 & \textbf{9.15 }& 8.93 & 11.39 \\
BT & 4.85E{-1} & 59.62 & 61.29 & \textbf{53.77} & 9.81 & 12.26 \\
BW & 3.16E{-3} & 52.19 & 48.28 & \textbf{45.01} & 13.75 & 6.77 \\
BM & 1.42E{-2} & 27.13 & 25.32 & \textbf{23.29} & 14.15 & 8.00 \\
BP & 3.61E{-1} & 43.05 & 41.76 & \textbf{38.52} & 10.54 & 7.76 \\

\bottomrule
\end{tabular}%
\label{tab:lossy}
\end{table}

\subsection{On the lossless compressors}
We now compare our \ourcompressor against the 5~lossless general-purpose compressors (i.e. \xz, \brotli, \zstd, \lz, and \snappy) and the {7}~lossless special-purpose compressors (i.e. \chimp, \chimpo, \tsxor, \DAC, \gorilla, \leco, and \alp).

\Cref{tab:lossless} reports the compression ratio, decompression speed, and random access speed of all compressors on each dataset, where the best result in each family of compressors is in bold, and the best result overall is underlined.
Moreover, we plot the trade-offs compression ratio vs compression speed, compression ratio vs decompression speed, and compression ratio vs random access speed in \Cref{fig:compression,fig:decompression,fig:random_access} and dig into them in \Cref{sssec:vs-compression-speed,sssec:vs-decompression-speed,sssec:vs-random-access-speed}, respectively.
{ Then, in \Cref{sssec:range-queries}, we explore the benefits of our approach from a data management perspective by focusing on the important case of range queries.}
Finally, we provide a summary of our experiments in \Cref{sssec:summary}.

\subsubsection{Compression ratio vs compression speed}\label{sssec:vs-compression-speed}
\Cref{fig:compression} shows the trade-off between compression speed and compression ratio of the lossless compressors.

First, we notice that \xz, followed by \brotli, achieve on average (but not always, see below) the best compression ratio at the cost of a slow compression speed, indeed, they are at the bottom-left of \Cref{fig:compression}.  
The opposite extreme in this trade-off is occupied by the special-purpose compressor \gorilla (at the top-right of \Cref{fig:compression}), which is 3 orders of magnitude faster in compression speed than \brotli but achieves a compression ratio above 70\% on average.
In between these two extremes, we notice that \alp is on the Pareto front of this trade-off, dominating (in order of increasingly higher compression speeds and worse compression ratios) \leco, \tsxor, \DAC, \zstd, \chimp, \chimpo, \lz and \snappy, but not our \ourcompressor.

\begin{figure}[t]
\centering
\includegraphics[width=\linewidth]{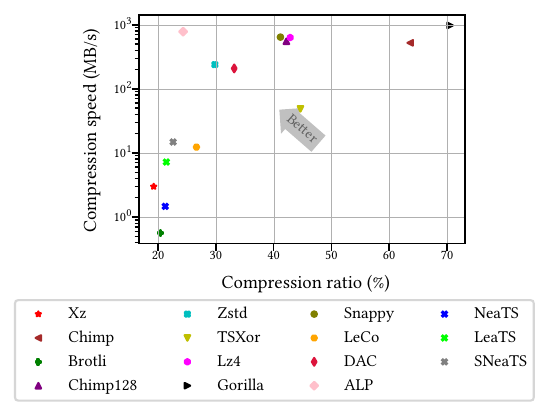}
\caption{The trade-off between compression ratio and speed of the lossless compressors, averaged on the 16 datasets.}
\label{fig:compression}
\end{figure}

\begin{figure*}[t]
\centering
\includegraphics[width=\textwidth]{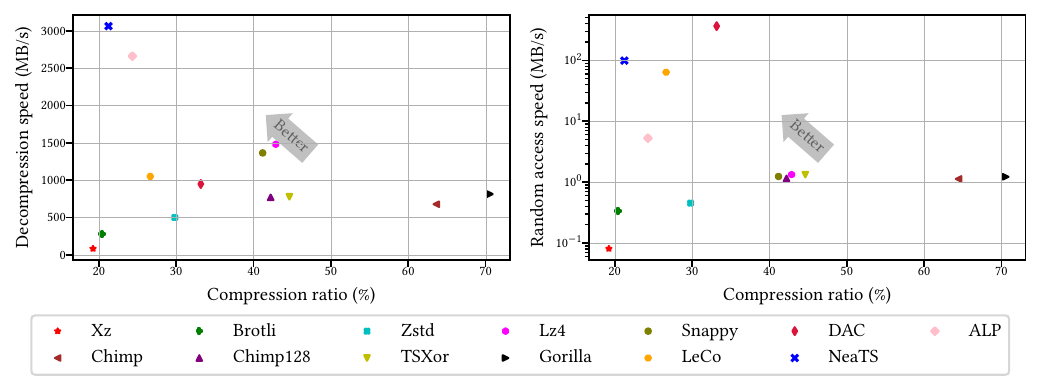}
\caption{The trade-off between compression ratio and decompression speed (left plot), and between compression ratio  and random access speed (right plot) of the lossless compressors, averaged on the 16 datasets. Note that the vertical axis of the right plot is logarithmic.}
\label{fig:random_access}
\label{fig:decompression}
\end{figure*}

Indeed, we observe from \Cref{tab:lossless} that \ourcompressor achieves the best compression ratio among the special-purpose compressors on 14/16 datasets, and the best compression ratio overall on 4/16 datasets.
Its compression speed is low but still 164.14\% faster than \brotli with just a 4.09\% worse compression ratio on average.
Moreover, \ourcompressor always achieves better compression ratios than \lz and \snappy (the fastest general-purpose compressors in terms of compression speed) by 52.77\% and 50.20\% on average, respectively.
\ourcompressor also achieves better compression ratios than \zstd for almost all the datasets (except for AP), with an average improvement of~{28.49\%}.

Compared to the special-purpose compressors, the only 2/{16} datasets in which \ourcompressor does not achieve the best compression ratio are BP and BT, where \alp is slightly better. However, these two datasets are also among the smallest ones, and \ourcompressor achieves a better compression ratio than \alp by 16.36\% on average.

\looseness=-1
We conclude this section by experimenting with two variants of \ourcompressor that improve the compression speed at the cost of worse compression. The first, named LeaTS, reduces the set of functions considered by \Cref{alg:neat-partitioning} to linear functions only. The second, named S\ourcompressor, reduces the set of functions and error bounds considered by \Cref{alg:neat-partitioning} to those surviving a model-selection procedure (included in the construction time) that picks the top-5 most-used pairs in the first 10\% of the dataset. The results, depicted in \Cref{fig:compression}, show that LeaTS and SNeaTS achieve a compression speed that is 5.22$\times$ and 12.86$\times$ that of \ourcompressor, and a compression ratio that is 0.89\% and 8.18\% worse than \ourcompressor, respectively.\footnote{In the lossless scenario, the space is clearly dominated by the storage of the corrections rather than the function parameters, which is why the improvement in compression ratio of nonlinear functions over linear ones is not as high as 12\%, as experienced in the lossy scenario (\Cref{tab:lossy}).} The latter variant, in particular, is both faster in compression speed than \leco by 36.26\% and better in compression ratio by 12.80\% on~average.

Despite their better compression ratios, these variants are still not as fast as \alp or \gorilla in compression speed. However, we anticipate from the next subsections that \ourcompressor also excels in decompression and random access speed, thus making it the most competitive compressor in a query-intensive and space-constrained scenario.
Moreover, if compression speed is key for the underlying application, we could imagine using a lightweight compressor like \alp or \gorilla when the time series is first ingested, and running \ourcompressor later on (or in the background) to provide much more effective compression and efficient query operations in the long~run.

\subsubsection{Compression ratio vs decompression speed}\label{sssec:vs-decompression-speed}
In an analytical scenario, the decompression speed is a crucial performance metric. 
The middle of~\Cref{tab:lossless} shows the decompression speed of all the compressors on all the datasets, while \Cref{fig:decompression} shows the trade-off between compression ratio and decompression speed averaged on all the datasets.

First, we notice from~\Cref{tab:lossless} that \ourcompressor achieves the fastest decompression speed on 10/16 datasets { thanks to its cache-friendly and vectorised decompression procedure}.
Compared to \alp, which obtains better performance on the remaining 6/16 datasets, \ourcompressor is both 16.36\% better in compression ratio and 27.08\% faster in decompression speed on average.
If we consider instead \xz and \brotli (i.e. the closest competitors to \ourcompressor in terms of compression ratio, as commented above), their decompression speeds are {44.92$\times$} and {12.27$\times$} lower on average than that of \ourcompressor, respectively.

\looseness=-1
Finally, \ourcompressor dominates all the other special-purpose compressors in this compression ratio vs decompression speed trade-off, as \Cref{fig:decompression} clearly shows by placing them at the bottom-right of \ourcompressor in the decompression plot.
For instance, compared to \leco, \ourcompressor is {18.23\%} better in compression ratio and {201.11\%}  
faster in decompression speed.

\subsubsection{Compression ratio vs random access speed}\label{sssec:vs-random-access-speed}
Another key performance metric for the efficient analysis of time series is the random access speed.
The bottom of~\Cref{tab:lossless} shows the average random access speed (for 10M queries) of all the compressors on all the datasets, while \Cref{fig:random_access} shows the trade-off between compression ratio and random access speed averaged on all the datasets.

\looseness=-1
First, we notice from \Cref{tab:lossless} that \DAC, followed by \ourcompressor, achieves the best random access speed.
However, we remark that \ourcompressor is much more effective than \DAC in terms of compression ratio, i.e. {37.25\%} better on average and up to {67.86\%} better overall.
This is why both \ourcompressor and \DAC occupy a prominent position in the compression ratio vs random access speed trade-off, as \Cref{fig:random_access} clearly shows by placing them close to the top-left edge of the random access plot.

\looseness=-1
\leco is the only other compressor supporting random access natively (i.e. without the block-wise approach described in \Cref{sssec:competitors}). \ourcompressor is both {118.55\%} faster in random access speed and {18.23\%} better in compression ratio than \leco.

The remaining compressors are from 2 to 3 orders of magnitude slower in random access speed than \ourcompressor. In particular, \ourcompressor dominates all other special-purpose compressors and the vast majority of general-purpose compressors in the compression ratio vs random access speed trade-off.
The only exceptions are \xz and \brotli that, compared to \ourcompressor, can provide slightly better compression ratios (except for the 4/{16} dataset where \ourcompressor is better) but much slower random access speeds (always), which is why they are at the bottom-left of every plot.

{
\subsubsection{Range queries}\label{sssec:range-queries}
The most fundamental queries in time series databases\,---\,such as trend analysis, anomaly detection, correlation analysis, and data aggregation\,---\,ultimately rely on accessing data within a specific time interval (i.e. a range query)~\cite{KhelifatiKDDC23,hao2021ts}, which boils down to a random access operation (to retrieve the first data point) followed by a scan (to retrieve the subsequent data points within the interval).
We thus now focus on the best compressors in terms of random access or decompression speed (i.e. \alp, \DAC, \lz, and \ourcompressor) and evaluate their range query performance at different range sizes, from $10 \cdot 2^0$ to $10 \cdot 2^{16}$ data points. 
\Cref{fig:range_queries} shows the throughput in queries per second measured on 10K random range queries and averaged over the 11 largest datasets.
For range sizes smaller than 40, \DAC is the fastest solution, followed by \ourcompressor, which remains an order of magnitude more efficient than other compressors. For larger range sizes, \ourcompressor clearly outperforms all competitors.
This demonstrates the ability of \ourcompressor to provide a full spectrum of efficient data access, from small to large ranges, thus benefiting a wide variety of queries in time series databases.
}

\begin{figure}[t]
\centering
\includegraphics[width=\linewidth]{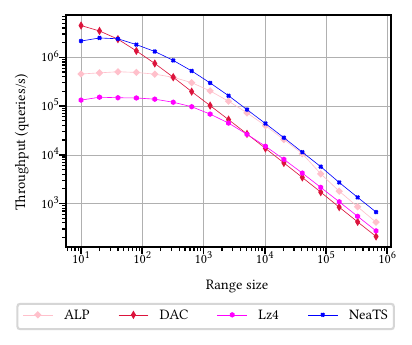}
\caption{ Range queries throughput across different range sizes.}
\label{fig:range_queries}
\end{figure}

\subsubsection{Summary}\label{sssec:summary}

Our results on 16 real-world time series show that \ourcompressor emerges as the only approach to date providing, simultaneously, compression ratios close to or better than the existing compressors (i.e. the best compression ratio among the special-purpose compressor on 14/16 datasets, and the best overall on 4/{16} dataset), a much faster decompression speed, and up to 3 orders of magnitude more efficient random access.

\looseness=-1
No other compressor can strike such a good trade-off among all these factors together.
For example, \xz achieves the best compression ratio on {9/16} datasets, but on average its decompression and random access speeds are {44.92$\times$} and {1657.10$\times$} slower than that of \ourcompressor, respectively.
\alp achieves the fastest compression speed (ignoring \gorilla, whose compression ratio is not very competitive), but on average \ourcompressor achieves 16.36\% better compression ratio, 27.08\% faster decompression speed, and at least one order of magnitude faster random access speed than \alp.
\DAC achieves the fastest random access speed, but on average \ourcompressor is 37.25\% better in compression ratio and 234.89\% faster in decompression speed than \DAC. 
{ Furthermore, \ourcompressor outperforms the other compressors for range queries involving 40 or more data points.}

This is evidence that \ourcompressor has the potential to be the compressor of choice for the storage and real-time analysis of massive and ever-growing amounts of time series data.

\begin{table*}[pt]
\caption{Compression ratio (top), decompression speed (middle), and random access speed (bottom) achieved by the 5 general-purpose and the {8} special-purpose lossless compressors (including our \ourcompressor) on 16 datasets, sorted by decreasing size. We highlight in bold the best result in each family, and in underline the best result overall.}
\label{tab:lossless}
\renewcommand{\arraystretch}{0.95}
\setlength{\tabcolsep}{5pt}
\centering
\begin{tabular}{c c  rrrrr  rrrrrrr r}
\toprule
\multicolumn{2}{c}{\multirow{2}{*}{Dataset}} &
  \multicolumn{5}{c}{General-purpose compressors} &
  \multicolumn{7}{c}{Special-purpose compressors} \\  %
  \cmidrule(lr){3-7} \cmidrule(l){8-15}
 & &
  \xz &
  \brotli &
  \zstd &
  \lz &
  \snappy &
  \chimpo &
  \chimp &
  \tsxor &
  \DAC &
  \gorilla &
  \leco &
  \alp &
  \ourcompressor \rule{0pt}{2.6ex} \\ 
\midrule
\multirow{15}{*}{\rotatebox{90}{Compression ratio (\%)}}
& IT & \textbf{12.86} & 14.25 & 23.46 & 41.31 & 36.96 & 29.43 & 72.30 & 30.76 & 23.83 & 78.60 & 13.62 & 16.86 & \textbf{\underline{11.88}} \\
& US & 9.18 & \textbf{8.70 }& 12.82 & 27.09 & 21.51 & 18.94 & 54.55 & 18.89 & 24.95 & 57.54 & 9.16 & 10.50 & \textbf{\underline{8.02}}\\
&  ECG & \textbf{\underline{12.12}} & \textbf{\underline{12.12}} & 17.04 & 26.14 & 33.75 & 54.11 & 43.18 & 20.03 & 25.39 & 45.26 & 15.58 & 16.23 & \textbf{12.96} \\
& WD & \textbf{\underline{23.60}} & 27.60 & 33.78 & 52.70 & 54.19 & 43.38 & 84.09 & 46.42 & 25.75 & 91.02 & 24.71 & 24.90 & \textbf{24.37} \\
& AP & \textbf{\underline{12.35}} & 12.69 & 17.87 & 26.50 & 24.82 & 30.00 & 35.76 & 34.78 & 41.13 & 37.67 & 23.52 & 25.74 & \textbf{19.27} \\
& UK & 9.42 & \textbf{\underline{9.06 }}& 12.99 & 26.94 & 21.41 & 23.13 & 46.95 & 15.85 & 25.79 & 53.92 & 10.83 & 11.64 & \textbf{9.09 }\\
& GE & 11.07 & \textbf{\underline{11.04}} & 15.27 & 30.25 & 23.94 & 21.08 & 66.90 & 21.44 & 29.01 & 71.49 & 13.43 & 13.88 & \textbf{12.11} \\
& LON & \textbf{\underline{17.03}} & 18.63 & 32.72 & 49.71 & 49.28 & 58.64 & 61.70 & 71.64 & 47.27 & 63.09 & 20.74 & 26.87 & \textbf{17.53} \\
& LAT & \textbf{\underline{21.51}} & 23.67 & 40.77 & 52.12 & 51.44 & 58.09 & 61.44 & 71.93 & 47.27 & 65.02 & 25.56 & 26.70 & \textbf{22.22} \\
& DP & \textbf{16.37} & 17.02 & 29.35 & 48.61 & 47.54 & 49.53 & 77.17 & 60.91 & 26.95 & 83.53 & 17.83 & 22.04 & \textbf{\underline{16.10}} \\
& CT & \textbf{15.72} & 16.37 & 25.33 & 42.92 & 37.31 & 36.09 & 73.25 & 30.96 & 19.14 & 87.11 & 17.91 & 15.27 & \textbf{\underline{14.20}} \\
& DU & 8.21 & \textbf{\underline{7.78 }}& 11.37 & 23.00 & 18.62 & 21.68 & 39.74 & 18.31 & 11.14 & 44.49 & 28.54 & 13.34 & \textbf{9.46 }\\
& BT & \textbf{\underline{45.66}} & 45.69 & 58.12 & 67.20 & 68.64 & 46.90 & 84.01 & 53.88 & 57.07 & 92.88 & 58.15 & \textbf{46.25} & 54.01 \\
& BW & \textbf{\underline{36.17}} & 41.49 & 50.24 & 58.74 & 58.79 & 71.27 & 87.16 & 82.32 & 45.91 & 99.72 & 56.99 & 50.01 & \textbf{45.21} \\
& BM & \textbf{\underline{19.67}} & 20.70 & 29.52 & 43.58 & 39.39 & 40.96 & 61.96 & 48.18 & 37.42 & 74.67 & 50.72 & 30.80 & \textbf{23.44} \\
& BP & \textbf{\underline{36.97}} & 39.85 & 66.43 & 69.03 & 71.22 & 72.09 & 67.84 & 87.86 & 42.79 & 82.72 & 39.03 & \textbf{38.37} & 39.89 \\
\cmidrule(lr){1-7} \cmidrule(lr){8-15}
\multirow{15}{*}{\rotatebox{90}{Decompression speed (MB/s)}}
& IT & 90.83 & 304.65 & 459.91 & \textbf{1405.36} & 1207.61 & 725.74 & 598.36 & 743.69 & 999.45 & 795.22 & 1082.45 & 2249.26 & \textbf{\underline{2549.04}} \\
& US & 133.37 & 396.11 & 643.89 & 1609.06 & \textbf{1928.74} & 1109.06 & 692.86 & 1084.44 & 896.80 & 839.86 & 1097.74 & 2295.05 & \textbf{\underline{2982.09}} \\
&  ECG & 94.39 & 253.72 & 512.45 & 1325.59 & \textbf{1473.75} & 559.59 & 645.71 & 773.99 & 1082.36 & 790.96 & 1306.75 & 2344.27 & \textbf{\underline{2897.08}} \\
& WD & 56.94 & 227.86 & 490.46 & 1443.34 & \textbf{1191.21} & 732.56 & 606.28 & 754.89 & 864.86 & 854.76 & 1035.08 & \textbf{\underline{2253.39}} & 1936.42 \\
& AP & 106.07 & 332.69 & 683.65 & \textbf{1740.56} & 1599.33 & 885.36 & 893.27 & 885.33 & 705.69 & 978.22 & 1013.16 & 2116.29 & \textbf{\underline{2944.05}} \\
& UK & 131.95 & 392.03 & 634.25 & 1645.54 & \textbf{1815.86} & 853.25 & 670.05 & 1102.54 & 829.75 & 863.34 & 1087.50 & 2312.32 & \textbf{\underline{3015.90}} \\
& GE & 115.76 & 350.47 & 594.39 & \textbf{1611.32} & 1761.48 & 1008.27 & 656.04 & 986.84 & 962.52 & 829.82 & 1062.78 & 2307.16 & \textbf{\underline{3243.73}} \\
& LAT & 53.10 & 171.93 & 376.31 & \textbf{1289.07} & 1033.77 & 758.29 & 785.61 & 492.87 & 1019.50 & 622.10 & 960.91 & 2144.64 & \textbf{\underline{2935.03}} \\
& LON & 61.55 & 212.49 & 375.82 & \textbf{1198.19} & 963.81 & 776.44 & 801.64 & 493.81 & 1025.93 & 625.60 & 925.42 & 2113.61 & \textbf{\underline{2870.15}} \\
& DP & 75.14 & 293.10 & 429.39 & \textbf{1456.28} & 1098.91 & 581.50 & 626.82 & 618.88 & 1056.31 & 782.95 & 963.67 & \textbf{\underline{2057.49}} & 1953.35 \\
& CT & 79.93 & 298.52 & 458.10 & \textbf{1467.29} & 1225.00 & 760.50 & 556.01 & 768.86 & 1070.76 & 841.18 & 969.50 & \textbf{\underline{3290.46}} & 3269.65 \\
& DU & 147.58 & 436.39 & 696.34 & 1701.39 & \textbf{2031.31} & 923.52 & 805.54 & 1093.90 & 658.38 & 948.68 & 746.33 & \textbf{\underline{5410.27}} & 4007.91 \\
& BT & 33.43 & 173.70 & 414.01 & \textbf{1418.67} & 1039.45 & 662.70 & 590.54 & 672.81 & 783.96 & 805.45 & 846.57 & \textbf{\underline{4241.33}} & 2570.33 \\
& BW & 41.74 & 153.77 & 378.60 & \textbf{1527.78} & 1020.14 & 607.71 & 611.78 & 655.01 & 1189.51 & 849.33 & 1501.59 & \textbf{\underline{4006.59}} & 3675.40 \\
& BM & 69.31 & 235.46 & 494.08 & \textbf{1450.65} & 1283.83 & 758.16 & 626.94 & 700.60 & 802.93 & 790.78 & 1441.36 & 1677.20 & \textbf{\underline{4508.69}} \\
& BP & 36.85 & 187.29 & 366.83 & \textbf{1400.66} & 1163.14 & 624.77 & 682.54 & 612.99 & 1231.05 & 803.20 & 768.30 & 1752.34 & \textbf{\underline{3649.14}} \\
\cmidrule(lr){1-7} \cmidrule(lr){8-15}
\multirow{15}{*}{\rotatebox{90}{Random access speed (MB/s)}}
& IT & 0.09 & 0.36 & 0.39 & \textbf{1.21} & 1.04 & 1.05 & 0.92 & 1.28 & \textbf{\underline{137.93}} & 1.09 & 19.94 & 5.43 & 48.19 \\
& US & 0.11 & 0.45 & 0.52 & 1.24 & \textbf{1.57} & 1.55 & 1.10 & 1.76 & \textbf{\underline{108.11}} & 1.22 & 22.15 & 6.36 & 57.14 \\
&  ECG & 0.11 & 0.36 & 0.49 & 1.18 & \textbf{1.46} & 1.16 & 1.32 & 1.29 & \textbf{\underline{153.85}} & 1.50 & 30.08 & 3.78 & 50.00 \\
& WD & 0.05 & 0.26 & 0.42 & \textbf{1.23} & 1.01 & 1.12 & 1.02 & 1.21 & \textbf{\underline{135.59}} & 1.28 & 20.05 & 5.19 & 47.06 \\
& AP & 0.10 & 0.39 & 0.57 & \textbf{1.46} & 1.30 & 1.29 & 1.60 & 1.50 & \textbf{\underline{78.43 }}& 1.50 & 19.02 & 5.31 & 53.69 \\
& UK & 0.12 & 0.50 & 0.56 & 1.46 & \textbf{1.63} & 1.20 & 1.05 & 2.01 & \textbf{\underline{131.15}} & 1.26 & 25.94 & 6.26 & 76.92 \\
& GE & 0.12 & 0.42 & 0.51 & 1.38 & 1.46 & 1.42 & 1.02 & 1.65 & \textbf{\underline{186.05}} & 1.18 & 30.75 & 4.23 & 80.00  \\
& LAT & 0.06 & 0.25 & 0.37 & \textbf{1.22} & 1.01 & 1.14 & 1.24 & 0.98 &\textbf{\underline{210.53}} & 0.98 & 22.15 & 5.19 & 76.19 \\
& LON & 0.07 & 0.32 & 0.38 & \textbf{1.22} & 0.99 & 1.16 & 1.26 & 0.97 &\textbf{\underline{210.53}} & 0.99 & 21.07 & 5.12 & 80.01 \\
& DP & 0.07 & 0.33 & 0.37 & \textbf{1.31} & 0.97 & 0.82 & 0.93 & 0.94 & \textbf{\underline{571.43}} & 1.05 & 53.48 & 5.56 & 123.08 \\
& CI & 0.07 & 0.34 & 0.41 & \textbf{1.28} & 1.10 & 1.15 & 0.90 & 1.19 & \textbf{\underline{666.67}} & 1.16 & 57.22 & 6.04 & 140.35 \\
& DU & 0.15 & 0.58 & 0.66 & 1.62 & \textbf{1.95} & 1.38 & 1.38 & 1.92 & \textbf{\underline{363.64}} & 1.50 & 96.39 & 7.87 & 142.46 \\
& BT & 0.03 & 0.18 & 0.39 & \textbf{1.31} & 0.97 & 1.05 & 1.06 & 1.15 & \textbf{\underline{666.67}} & 1.23 & 145.45 & 4.49 & 140.35 \\
& BW & 0.04 & 0.17 & 0.37 & \textbf{1.46} & 0.99 & 0.93 & 1.08 & 1.05 & \textbf{\underline{800.00}} & 1.23 & 131.15 & 4.38 & 148.15 \\
& BM & 0.07 & 0.27 & 0.47 & \textbf{1.40} & 1.25 & 1.05 & 1.06 & 1.18 & \textbf{\underline{533.33}} & 1.13 & 145.45 & 4.61 & 160.00 \\
& BP & 0.04 & 0.22 & 0.38 & \textbf{1.37} & 1.17 & 1.06 & 1.17 & 1.18 & \textbf{\underline{888.89}} & 1.35 & 181.82 & 4.55 & 163.26 \\
\bottomrule
\end{tabular}%
\end{table*}

\section{Related Work}\label{sec:related}

We now review the literature of general- and special-purpose lossless compressors for time series, and of lossy compressors. For the latter, we focus on approaches based on error-bounded functional approximations, which are relevant to our work.

\paragraph{General-purpose lossless compressors}
These compressors are not specifically designed for time series but can be applied to any byte sequence. 
We discuss below the ones based on the LZ77-parsing~\cite{LZ77}, which currently offer the best combination of compression ratio and (de)compression speed.

\brotli~\cite{Alakuijala:2018} relies on a modern variant of the LZ77-parsing of the input file that uses a pseudo-optimal entropy encoder based on second-order context modelling. \xz~\cite{Xz} achieves effective compression by using Markov chain modelling and range coding of the LZ77-parsing. 
\zstd~\cite{Collet:2016} achieves very fast (de)compression speed and good compression ratios via a tabled asymmetric numeral systems encoding. Finally, \lz~\cite{Collet:2013} and \snappy~\cite{Snappy} trade compression effectiveness with speed by adopting a faster  byte-oriented encoding format for the LZ77-parsing. 
Given this plethora of approaches offering a variety of trade-offs, we tested them all in our experiments.

\paragraph{Special-purpose lossless compressors}

Most recent compressors for time series are often based on encoding the result of bitwise XOR operations between close or adjacent floating-point values. Their compression ratio is strongly influenced by data fluctuations: the more severe the fluctuations, the less effective the compression. On the other hand, these algorithms offer very fast (de)compression speeds. %

For instance, Gorilla~\cite{Pelkonen:2015} improves earlier floating-point compressors~\cite{Ratanaworabhan:2006,Lindstrom:2006,Burtscher:2007} by simply computing the XOR between consecutive values of the time series and properly encoding the number of leading zeros and significant bits of the result. 

\chimp~\cite{Liakos:2022} improves both the compression ratio and speed of Gorilla by using different encoding modes based on the number of trailing and leading zeros of the XOR result. \chimpo~\cite{Liakos:2022} and \tsxor~\cite{Bruno:2021} use a window of 128 values to choose the best reference value for the XOR computation: \chimpo uses the value that produces the most trailing zeros, while \tsxor selects the value with the most bits in common.

Elf~\cite{Li:2023} performs an erasing operation on the floats before XORing them, which makes the resulting value more compressible. We do not experiment with Elf because \alp~\cite{Afroozeh:2023} (described next and included in our experiments) was shown to achieve better compression ratios on average, and always faster compression and decompression speeds than Elf.

\alp~\cite{Afroozeh:2023}, unlike the above approaches, does not use XOR operations but rather builds on the idea of encoding a double $x$ via the storage of the significant digits $d$ and an exponent $e$, i.e. $d=\textit{round}(x \cdot 10^e)$, also known as Pseudodecimal Encoding~\cite{BtrBlocks}. It finds a single best exponent for a block of 1024 values and bit-packs the resulting significant digits via the frame-of-reference integer code. Values failing to be losslessly encoded as pseudodecimals are stored uncompressed separately. Further optimisations (such as cutting trailing zeros and using vector instructions) are applied to improve the compression ratio and speed.

BUFF~\cite{BUFF} compresses a float by eliminating the less significant bits based on a given precision, splitting it into the integer and fractional parts, and then compressing the two parts separately with a fixed-length encoding. We do not experiment with BUFF because its average compression ratio on time series was shown to be worse than that of Chimp (which, in turn, is always worse than \ourcompressor in our experiments) and its compression and decompression speeds were shown to be no more than 6$\times$ that of Chimp~\cite{FCBench} (which, in turn, are outmatched by those of ALP by one order of magnitude~\cite{ALP}).

Sprintz~\cite{Blalock:2018} encodes time series using four components: forecasting, bit packing, run-length encoding, and entropy coding. Sprintz focuses on 8- or 16-bit integers, which is a limitation for our datasets with 64-bit data. Also, it was shown to be worse than BUFF in compression ratio and speed~\cite{BUFF}.

There are also floating-point compressors targeted to scientific simulation and observational data, such as fpzip~\cite{fpzip} and ndzip~\cite{ndzip}, but we do not experiment with them since their compression ratios were shown to be poor on time series~\cite{FCBench}.

Concerning the random access operation to time-series values, this is not directly offered by most compressor implementations. Therefore, the typical approach is to compress blocks of the time series separately, and then access a single value by decompressing just the corresponding block.
This is often insufficient to guarantee a reasonable speed and use of computational resources (as our experiments confirm), which is why Brisaboa et al.~\cite{Brisaboa:2013} introduced the Directly Addressable Codes (DAC) scheme that enables fast access to individual values. Given this feature, although designed for generic integer sequences, we included DAC in our experiments.

DACTS~\cite{Brandon:2021} uses Re-Pair~\cite{Larsoon:2000} on top of DAC to better capture repeating patterns. This additional compression step is effective when the time series is highly repetitive but slows down the access time, so DACTS proved to be useful on the so-called industrial time series originating from sensors producing long sequences of constant values. This is a restrictive situation, so we did not experiment with DACTS. 

Titchy~\cite{Vestergaard:2021} focuses on random access to IoT data and relies on a dictionary-based approach combined with a partitioning of the time series into chunks. Each chunk is represented with a pair $\langle \mathit{base}, \mathit{deviation}\rangle$, where $\mathit{base}$ indicates the item of the dictionary to copy, and $\mathit{deviation}$ encodes what makes the chunk slightly different from the other. We could not experiment with Titchy because its source code is not available.%

Finally, \leco~\cite{leco:2024} is a recent proposal (not specific for time series) that, similarly to our \ourcompressor and earlier work~\cite{Boffa:2022talg,Ao:2011}, is based on the idea of lossless compression  via functional approximations and the storage of residuals. \leco uses a learned model to choose a kind of function suitable for the values at hand, and then it uses a heuristic partitioning algorithm that greedily splits fragments (each associated with a function learned through regression methods) and merges neighbouring ones if this improves an estimate of the compression ratio.
Our \ourcompressor, instead, compresses data with different function types, learns each function optimally under a given error bound, and employs a rigorous partitioning algorithm to minimise the actual compression ratio.
These features, together with the more query-efficient compressed layout, make \ourcompressor better than \leco in compression ratio, random access, and decompression speed, as our experiments show.

\paragraph{Functional approximation for lossy compression}
The idea behind lossy functional approximation is to represent a time series as a sequence of functions over time, often under a chosen error metric~\cite{Chiarot:2022survey,Liu:2009}.
In the case of $L_2$-norm, the goal is to minimise the {\em sum of the squared residuals} (i.e. the vertical distances between the true and the approximated values). In the case of the $L_1$-norm, the goal is to minimise the \emph{sum of the absolute values} of the residuals.
The focus of our paper is instead to bound the $L_{\infty}$-norm of the residuals, thus bounding the \emph{maximum} absolute residual.

As discussed in \Cref{sec:background}, the optimal linear-time algorithm to solve this problem using a Piecewise Linear Approximation~(PLA) with the minimum number of segments was first proposed by O'Rourke~\cite{ORourke:1981} (see also \cite{KeoghCHP01segmenting,Dalai:2006,Elmeleegy:2009,Liu:2008,Xie:2014}).
Interestingly, ModelarDB~\cite{ModelarDB} has shown how PLAs can be used in a fully-fledged distributed time series database. ModelarDB could benefit from our results since it computes PLAs via an algorithm~\cite{Elmeleegy:2009} that was shown to use more segments than the optimal one that we use as a baseline~\cite{ORourke:1981,Xie:2014}.
Other works~\cite{Ferragina:2020,KitsiosLPK23} proposed to further compress similar segments, which is a post-processing step that we can apply to our techniques too.

A few works~\cite{Xu:2012,Qi:2015,Eichinger:2015} address the problem of partitioning a time series using nonlinear functions. We compared our approach to the most recent one that employs nonlinear $\varepsilon$\hyp{}approximations~\cite{Xu:2012,Qi:2015}, called Adaptive Approximation~(AA).
AA heuristically partitions a time series using quadratic, linear, and exponential functions. 
However, as our experiments show, despite the use of nonlinear functions, AA is almost always worse than PLA, and in turn worse than our approach. In fact, \ourcompressorlossy finds a better partition of the time series into fragments and selects the best approximating function within a larger set of nonlinear ones.

Finally, we mention HIRE~\cite{HIRE}, which focuses on the distinct problem of constructing a single encoding of a time series that can be decompressed at different $L_{\infty}$ error bounds. Since HIRE relies on piecewise constant approximations as building blocks, we believe it could benefit from our more general nonlinear approximations.

\section{Conclusions and Future Work}\label{sec:conclusions}

We introduced new lossy and lossless compressors that harness the trends and patterns in time series data via a sequence of error-bounded linear and nonlinear approximations of different kinds and shapes. Our approaches experimentally proved to offer new trade-offs in terms of random access speed, decompression speed, and compression ratio compared to existing compressors on 16 diverse time series datasets.

For future work, we suggest further compressing the nonlinear approximation models by exploiting similarities between functions as introduced in~\cite{Ferragina:2020,Kitsios:2023}. 
Another interesting research direction is to exploit the information encoded by the functions to efficiently answer aggregate queries on the time series data. %
Finally, it would be interesting to investigate the impact of our new techniques for computing error-bounded nonlinear approximations in the design of learned data structures~\cite{Kraska:2018,MarcusZK20,kipf:2020,Ferragina:2020,Ferragina2023lemon,Ferragina:2020book,WongkhamLLZLW22,SunZL23,Ferragina2022repetitions,Ferragina:2023string,Amato:2022,Amato:2023,Amato:2023b,AbrarM24}.

{\small\medskip\noindent\textbf{Acknowledgments.}
This work was supported by the European Union -- Horizon 2020 Program under the scheme ``INFRAIA-01-2018-2019 -- Integrating Activities for Advanced Communities'', Grant Agreement n. 871042, ``SoBigData++: European Integrated Infrastructure for Social Mining and Big Data Analytics'' (http://www.sobigdata.eu), by the NextGenerationEU -- National Recovery and Resilience Plan (Piano Nazionale di Ripresa e Resilienza, PNRR) -- Project: ``SoBigData.it - Strengthening the Italian RI for Social Mining and Big Data Analytics'' -- Prot. IR0000013 -- Avviso n. 3264 del 28/12/2021, by the spoke ``FutureHPC \& BigData'' of the ICSC -- Centro Nazionale di Ricerca in High-Performance Computing, Big Data and Quantum Computing funded by European Union -- NextGenerationEU -- PNRR, by Regione Toscana under POR~FSE~2021/27.
}
\balance

\bibliographystyle{IEEEtran}
\bibliography{bibliography}

\vspace{12pt}

\end{document}